\documentclass[a4paper,fleqn]{cas-dc}

\usepackage[numbers]{natbib}

\usepackage{amsmath}
\usepackage{amsthm}
\usepackage{amssymb}
\usepackage{algorithm}

\newtheorem{theorem}{Theorem}
\newtheorem{lemma}{Lemma}

\newtheorem{definition}{Definition}
\newtheorem{assumption}{Assumption}

\usepackage{algpseudocode} 
\usepackage{hyperref}  
\usepackage{pgfplots}
\usepackage{float}
\usepackage{booktabs}
\usepackage{multirow}
\usepackage{threeparttable}
\usepackage{subcaption}

\def\tsc#1{\csdef{#1}{\textsc{\lowercase{#1}}\xspace}}
\tsc{WGM}
\tsc{QE}
\tsc{EP}
\tsc{PMS}
\tsc{BEC}
\tsc{DE}

\begin{document}
\renewcommand{\dbltopfraction}{0.95}   
\renewcommand{\dblfloatpagefraction}{0.9}
\renewcommand{\topfraction}{0.95}      
\renewcommand{\bottomfraction}{0.95}   
\renewcommand{\textfraction}{0.05}     
\setcounter{dbltopnumber}{3}           
\setcounter{topnumber}{2}

\let\WriteBookmarks\relax
\def\floatpagepagefraction{1}
\def\textpagefraction{.001}
\shorttitle{Fine-grained Distributed Backdoor Attacks in Federated Learning}
\shortauthors{Wang Jian et~al.}

\title [mode = title]{Fine-grained Distributed Backdoor Attacks in Federated Learning}                      

\author[1,2]{Jian Wang}[style=chinese, prefix=Dr, orcid=0000-0001-7511-2910]
\cormark[1]
\fnmark[1]
\ead{jenseWang@outlook.com}
\credit{Conceptualization, Methodology, Writing - Original Draft}

\author[3]{Hong Shen}[style=chinese, prefix=Prof]
\fnmark[2]

\credit{Conceptualization, Methodology, Supervision}

\author[1]{Wei Ke}[style=chinese, prefix=Prof]
\fnmark[3]

\credit{Project administration, Supervision}

\author[4]{Xue Hua Liu}[style=chinese, prefix=Prof]

\credit{Data Curation, Resources}

\cortext[1]{Corresponding author: jenseWang@outlook.com}

\affiliation[1]{organization={Faculty of Applied Sciences, Macao Polytechnic University},
	addressline={R. de Luís Gonzaga Gomes},
	city={Macao},
	postcode={999078},
	country={China}}

\affiliation[2]{organization={Faculty of Cyberspace Security, Guangzhou University of Software},
	addressline={No.548 Guangcong South Road, High-tech Industrial Park, Conghua Economic Development Zone},
	city={Guangzhou},
	postcode={501900},
	country={China}}
\affiliation[3]{organization={School of Engineering and Technology, Central Queensland University},
	addressline={Bruce Highway, Norman Gardens},
	city={Rockhampton},
	state={Queensland},
	postcode={4702},
	country={Australia}}

\affiliation[4]{organization={Faculty of  Software and Artificial Intelligence, Guangzhou University of Software},
	addressline={No.548 Guangcong South Road, High-tech Industrial Park, Conghua Economic Development Zone},
	city={Guangzhou},
	postcode={501900},
	country={China}}

\fntext[fn1]{Supported by the Guangdong Provincial Department of Education (Grant No. 2024KTSCX133).}
\fntext[fn2]{Supported by the Queensland Department of Environment and Science Quantum Challenges 2032 Program (Grant No. Q2032001).}
\fntext[fn3]{Supported by the Science and Technology Development Fund, Macao SAR (File No. 0015/2023/RIA1)}


\begin{highlights}
	\item Proposes a dynamic trigger generation method based on Canny edge features and Laplacian noise injection for covert and adaptive backdoor embedding.  
	\item Designs an RGB channel decomposition strategy that enables distributed trigger injection among multiple malicious clients, enhancing stealthiness.  
	\item Introduces embedding vector optimization with contrastive learning and random projection hashing to amplify poisoned sample impact while reducing poisoning ratio.  
	\item Derives, under the stated analytical assumptions, the poisoning-ratio relation $1/(1+\gamma\alpha)$; on CIFAR-10, piecewise-linear estimates for target ASRs of 70\%--90\% indicate 37.4\%--48.4\% fewer poisoned samples than DBA.  
	\item Demonstrates through extensive experiments that the framework maintains high attack success rates under non-IID settings and effectively bypasses mainstream defense mechanisms.  
\end{highlights}

\begin{abstract}
Federated learning, as a privacy-preserving distributed machine learning paradigm, faces significant threats from backdoor attacks. Compared to centralized attacks, distributed backdoor attacks are more harmful but require more poisoned samples to compensate for the loss of trigger strength due to decomposition. Fixed trigger patterns are also easily detected by robust aggregation algorithms, increasing the risk of attack exposure.  To address these challenges, we propose a fine-grained distributed backdoor attack framework (FDBA). This framework uses dynamic trigger generation and embedding vector optimization to perform attacks with fewer poisoned samples. First, we design a dynamic trigger generation method based on image edge structures using the Canny algorithm to extract edge features, which are then injected with Laplacian noise. RGB channel decomposition is applied for covert adaptation of the distributed trigger, reducing detection chances. Second, we introduce an embedding vector contrastive learning strategy that forces poisoned samples to approach the target class center in the feature space, enhancing attack effectiveness.
On CIFAR-10, piecewise-linear estimates for target ASRs between 70\% and 90\% show that FDBA reduces the required poisoning ratio by 37.4\%--48.4\% compared with DBA. In non-independent and identically distributed (Non-IID) scenarios, FDBA retains 84.7\% of its IID attack performance under extreme heterogeneity, whereas DBA drops to 73.5\%, and the framework successfully bypasses mainstream defense mechanisms.

This study offers new insights into federated learning security and emphasizes the potential threats and defense challenges posed by fine-grained distributed attacks.
\end{abstract}

\begin{keywords}
Federated Learning \sep Backdoor Attack \sep Few-shot Attack \sep Dynamic Trigger \sep Embedding Optimization
\end{keywords}

\maketitle

\section{Introduction}

Federated learning enables distributed machine learning by allowing multiple clients to train models locally and upload updates to a central server for aggregation. This approach effectively leverages multiple data sources while protecting data privacy, making it widely applicable in fields like medical diagnostics, financial analysis, and smart devices. However, its distributed nature also presents opportunities for attackers, making federated learning vulnerable to unique security threats, particularly data poisoning attacks \cite{kairouz2021advances}. Data poisoning is a type of attack in which adversaries inject malicious data into the training set to disrupt the model's learning process, ultimately leading to incorrect predictions \cite{goldblumDatasetSecurityMachine2021}. A backdoor attack, a covert and dangerous form of data poisoning, involves embedding a trigger into the poisoned samples, allowing the model to produce a preset output under specific input conditions while maintaining normal predictions for other inputs \cite{liRethinkingTriggerBackdoor2021}.
In federated learning systems, distributed backdoor attacks coordinate multiple malicious clients by decomposing a global trigger into local sub-triggers \cite{xieDBADISTRIBUTEDBACKDOOR2020}. This decomposition creates a practical trade-off: no single client carries the complete trigger, but each local component has weaker influence. Therefore, a key challenge is how to design covert distributed triggers while limiting the number of poisoned samples.

To address these critical challenges, we propose the FDBA framework. The core intuition of FDBA is to shift the attack paradigm from static, pixel-level manipulation to dynamic, feature-level geometric optimization. Conceptually, FDBA operates through a synergistic pipeline: First, instead of using fixed visual patterns that are easily distorted by non-IID data distributions, FDBA employs Canny edge detection and Laplacian noise to dynamically generate structure-aware triggers that blend naturally with the host image. Second, to evade anomaly detection mechanisms that scrutinize coordinated malicious updates, we decouple the trigger into RGB color channels and distribute them across different malicious clients. 

Most importantly, to achieve maximum poisoning efficiency, we introduce a contrastive learning-based embedding optimization strategy. By mathematically forcing the feature representations of poisoned samples closer to the target class center in the latent space, we dramatically amplify the gradient impact of each malicious sample during the federated aggregation phase. This geometric amplification allows FDBA to achieve high attack effectiveness even under constrained poisoning budgets, effectively bridging the gap between stealthiness and attack power. 

The main contributions are as follows:
\begin{itemize}
	\item \textbf{Fine-grained Attack Framework}: In response to the challenges posed by existing DBAs that rely on large amounts of poisoned samples and lack stealth, we propose a new fine-grained DBA framework that significantly reduces the required number of poisoned samples through precise trigger placement and optimized embedding strategies.
	\item \textbf{Fine-grained Trigger Design}: A precise trigger generation mechanism based on image edge structures is designed to achieve fine-grained control over trigger placement, addressing the issue of traditional fixed triggers being detectable while enabling targeted poisoning strategies.
	\item \textbf{Embedding Vector Optimization}: We propose an embedding vector optimization method that magnifies the impact of poisoned samples on the global model, enabling effective attacks on federated learning systems with fewer poisoned samples.
	\item  \textbf{Analytical Characterization:}\\ Under the cumulative-influence model and the assumptions stated in Appendix~\ref{sec:Proof}, we derive the poisoning-ratio relation $\rho = (1 + \gamma\alpha)^{-1}$, where $\gamma$ represents the contrastive learning gain and $\alpha$ the edge structure preservation factor.
\end{itemize}

\section{Related Work}
Research on backdoor attacks and defenses in federated learning has developed into a dynamic interplay between offensive and defensive strategies. The technical evolution can be summarized as an enhancement of attack stealthiness and an adaptive progression of defense systems. This section provides a systematic review from three perspectives: the evolution of attack methods, stealth-enhancement techniques, and the development of defense mechanisms.

\subsection{Evolution of Attack Methods}
Early backdoor research focused on centralized settings. BadNets demonstrated that a model can learn a trigger-dependent behavior while retaining normal accuracy on clean inputs \cite{gu2019badnets}, and Chen et al. studied targeted backdoor injection through poisoned training data \cite{chen2017targeted}. These settings assume influence over centralized training data and therefore differ from federated learning. Bagdasaryan et al. introduced model-replacement attacks in federated learning by scaling a malicious client update to replace the aggregated global model \cite{byzantinegradientdescentHowBackdoorFederated2020}. Bhagoji et al. likewise analyzed model-poisoning strategies designed to remain effective under federated aggregation \cite{bhagoji2019analyzing}.

Xie et al. proposed Distributed Backdoor Attacks (DBA), decomposing a complete trigger into complementary sub-patterns injected by multiple malicious clients \cite{xieDBADISTRIBUTEDBACKDOOR2020}. Wang et al. subsequently studied distributed attacks based on dynamically generated global triggers that are decomposed into client-side sub-triggers \cite{wang2024distributed}.

Recent research has extended into multimodal and cross-task scenarios: Zhang et al. implanted semantic backdoors in NLP models, triggering misclassification with specific keyword combinations \cite{zhang2021trojaning}.

\subsection{Stealth-Enhancement Techniques}
Traditional pixel-block triggers can be visually conspicuous, motivating stealth-oriented trigger designs. Li et al. combined steganography and regularization to construct visually inconspicuous backdoors \cite{li2020invisible}. Doan et al. jointly modified the input and latent representations to improve imperceptibility \cite{doan2021backdoor}. In federated learning, Gong et al. proposed coordinated attacks with model-dependent triggers \cite{gongCoordinatedBackdoorAttacks2022}.

Sun et al. formulated data poisoning in federated machine learning as an optimization problem that accounts for the federated training process \cite{sunDataPoisoningAttacks2020}.

Dynamic backdoor attacks generate input-dependent triggers rather than using a single fixed pattern \cite{salem2022dynamic}. In federated settings, Chen et al. studied backdoor attacks against federated meta-learning \cite{chen2020backdoor}, while Rieger et al. proposed DeepSight, which inspects model updates to mitigate federated backdoors \cite{rieger2022deepsight}.

\subsection{Evolution of defense mechanisms }
The first generation of defenses focused on robust aggregation: Blanchard's Krum selects updates closest to the majority cluster \cite{blanchard2017machine}, while Yin et al. proposed coordinate-wise trimmed aggregation methods with theoretical Byzantine-robustness guarantees \cite{yin2018byzantine}. FLCert instead provides certified robustness by partitioning clients into groups, learning multiple global models, and aggregating their predictions by majority vote \cite{cao2022flcert}. FLAME clusters and clips model updates before adding noise to the aggregated update \cite{nguyen2022flame}.

Feature-based detection methods have also evolved rapidly. Neural Cleanse reverse-engineers candidate triggers and detects anomalously small trigger patterns \cite{wang2019neural}. Meta Neural Analysis trains a meta-classifier to distinguish Trojaned from clean models \cite{xu2021detecting}. In federated learning, BAFFLE uses client feedback on the global model to detect backdoor behavior without requiring access to client training data \cite{andreina2021baffle}.

Furthermore, some study has considered how to repair the poisoned model. Model repair techniques include Fine-Pruning, which prunes backdoor-related neurons but leads to a decrease in normal accuracy \cite{liu2018fine}. Wu’s adversarial pruning identified vulnerable neurons using adversarial samples, achieving higher backdoor removal rates on CIFAR-10 \cite{wu2021adversarial}. In federated learning, Huang’s Lockdown built an isolation subspace training mechanism, using orthogonal projections to separate malicious parameters, maintaining high normal accuracy while eliminating most backdoors on ImageNet tasks \cite{huang2024lockdown}.

\subsection{Current Limitations and Research Gaps}

Table~\ref{tab:backdoor_evolution} provides a comprehensive overview of the evolution of backdoor attacks and defenses in federated learning, systematically categorizing the key innovations, performance achievements, and inherent limitations across different research directions.

\begin{table*}
	\centering
	\caption{Summary of backdoor attacks and defenses in federated learning}
	\label{tab:backdoor_evolution}
	\begin{threeparttable}
		\begin{tabular}{@{}lllll@{}}
			\toprule
			\textbf{Category} & \textbf{Method} & \textbf{Key feature} & \textbf{Performance} & \textbf{Main limitation} \\
			\midrule
			\multirow{3}{*}{\begin{tabular}[c]{@{}l@{}}	\textbf{Attack}\\ 	\textbf{evolution}\end{tabular}} 
			& BadNets~\cite{gu2019badnets} & Pixel trigger & Triggered misclassification & Centralized setting \\
			& Model replacement~\cite{byzantinegradientdescentHowBackdoorFederated2020} & Weight amplification & High ASR & Easily detected \\
			& DBA~\cite{xieDBADISTRIBUTEDBACKDOOR2020} & Trigger decomposition & Multi-client coordination & Requires coordinated clients \\
			\midrule
			\multirow{3}{*}{\begin{tabular}[c]{@{}l@{}}\textbf{Stealth}\\ \textbf{enhancement}\end{tabular}} 
			& Invisible backdoor~\cite{li2020invisible} & Steganography + regularization & Visually subtle triggers & Centralized setting \\
			& Federated poisoning~\cite{sunDataPoisoningAttacks2020} & Optimization-based attack & FL-aware objective & Optimization cost \\
			& Dynamic backdoor~\cite{salem2022dynamic} & Input-dependent trigger & Trigger diversity & Centralized setting \\
			\midrule
			\multirow{4}{*}{\begin{tabular}[c]{@{}l@{}}\textbf{Defense}\\ \textbf{mechanisms}\end{tabular}} 
			& Krum~\cite{blanchard2017machine} & Geometric update selection & Byzantine robustness & Honest-majority assumptions \\
			& FLCert~\cite{cao2022flcert} & Group ensemble + voting & Certified robustness & Multiple model training \\
			& BAFFLE~\cite{andreina2021baffle} & Client feedback & Server-data-free detection & Extra client evaluation \\
			& Lockdown~\cite{huang2024lockdown} & Subspace isolation & High normal accuracy & Complex implementation \\
			\bottomrule
		\end{tabular}
		\begin{tablenotes}
			\item ASR, attack success rate; Non-IID, non-independent and identically distributed data.
		\end{tablenotes}
	\end{threeparttable}
\end{table*}

As illustrated in Table~\ref{tab:backdoor_evolution}, the research landscape reveals a continuous arms race between attack sophistication and defense advancement, with each generation of methods addressing limitations of their predecessors while introducing new challenges. The evolution from centralized attacks like BadNets to distributed approaches such as DBA demonstrates the field's adaptation to federated learning constraints, yet significant gaps remain unresolved.

Current study on distributed backdoor attacks faces significant challenges, primarily manifesting in two critical dimensions: achieving both stealthiness and effectiveness, and addressing the need for fine-grained control over attack precision and targeting. The fundamental tension between these objectives creates a persistent dilemma where high-stealth attacks invariably sacrifice effectiveness, while highly effective attacks rely on elevated poisoning rates that substantially increase detection risks.

Traditional distributed attacks exemplify this challenge through their trigger decomposition strategies. Methods like DBA decompose complete triggers into complementary sub-patterns distributed across malicious clients (e.g., each client injecting localized features from one quadrant of the image) to evade detection. However, this decomposition inherently weakens the global trigger's potency and semantic coherence, resulting in diminished attack effectiveness. The distributed nature creates a dilution effect that compromises trigger integrity while failing to achieve the desired stealth-effectiveness balance.

More critically, existing solutions demonstrate systematic failures in low-poisoning scenarios due to their reliance on gradient alignment strategies that demand extremely precise synchronization of poisoned sample gradients. This synchronization requirement, as demonstrated by methods constraining updates within ±15° of benign gradients, creates operational limitations that force attackers into suboptimal choices: maintaining low poisoning rates with significantly reduced effectiveness or increasing poisoning rates while elevating detection risks.

The gradient alignment constraint becomes particularly problematic as it requires maintaining poisoned gradients within narrow angular bounds relative to benign gradients. This precision requirement intensifies with decreasing malicious client numbers and increasing data heterogeneity across federated participants. Consequently, attackers are compelled to increase poisoning rates to achieve necessary gradient coordination, thereby compromising fundamental stealth objectives and significantly raising the risk of exposure and detection.

Meanwhile, the continuous development of defense mechanisms has progressively constrained the operational space for effective attacks. The evolution from simple gradient filtering methods like Krum to sophisticated behavioral analysis systems like FLAME and BAFFLE demonstrates the defensive community's rapid adaptation. Modern defenses incorporate temporal patterns, multi-dimensional anomaly detection, and advanced statistical analysis, rendering traditional attack strategies increasingly obsolete and forcing attackers to develop more complex circumvention strategies.

The data heterogeneity inherent in federated learning environments introduces additional complexity layers. Non-IID data distributions across clients challenge both attack consistency and defense effectiveness, creating scenarios where attack success varies dramatically across different federated participants. Current stealth enhancement techniques, while addressing some aspects of this challenge through adaptive mechanisms, often introduce computational overhead and implementation complexity that limit their practical applicability.

In addition, previous works such as DBA~\cite{xieDBADISTRIBUTEDBACKDOOR2020} and Neurotoxin~\cite{zhang2022neurotoxin} provide empirical evidence of the effectiveness of these attacks, yet they lack a strict theoretical analysis of poisoning efficiency. Our work fills this gap by establishing a formal mathematical framework that quantifies the poisoning reduction enabled by contrastive learning and edge-based triggers.

Therefore, there exists an urgent need for a new distributed attack paradigm that can simultaneously maintain high attack success rates under low-poisoning constraints, demonstrate robustness to heterogeneous data distributions characteristic of federated learning environments, and effectively bypass increasingly sophisticated mainstream defense detection mechanisms. Such a paradigm would require fundamental innovations in trigger design methodologies, gradient coordination strategies, and attack orchestration frameworks that transcend the current limitations outlined in Table~\ref{tab:backdoor_evolution} while addressing the core challenges of stealth-effectiveness trade-offs and synchronization requirements in distributed attack scenarios.

\section{Preliminaries}
The mathematical notation used throughout this paper is summarized in Table~\ref{tab:notation} for reference.

\begin{table}[htbp]
	\centering
	\caption{Mathematical Notation}
	\label{tab:notation}
	\begin{tabular}{cl}
		\toprule
		\textbf{Symbol} & \textbf{Definition} \\
		\midrule
		$K$ & Number of clients in federated learning \\
		$K_m$ & Set of malicious clients \\
		$D_i$ & Local dataset of client $i$ \\
		$|D_i|$ & Size of local dataset $D_i$ \\
		$w$ & Global model parameters \\
		$w_t$ & Global model parameters at round $t$ \\
		$\Delta w_i$ & Model update from client $i$ \\
		$f(w, x)$ & Global model with parameters $w$ and input $x$ \\
		$\ell(\cdot)$ & Loss function \\
		$T(\cdot)$ & Trigger function \\
		$T'(x)$ & Edge-based trigger for sample $x$ \\
		$y^*$ & Target attack label \\
		$p$ & Poisoning ratio (fraction of poisoned samples) \\
		$\rho$ & Poisoning ratio reduction factor $\rho<1$ \\
		$\gamma$ & Contrastive learning amplification gain \\
		$\alpha$ & Edge structure preservation factor \\
		$\lambda$ & Embedding strength control parameter \\
		$\mu, b$ & Location and scale of the Laplace noise \\
		$e_i$ & Embedding vector of sample $i$ \\
		$e_p, e_n$ & Promoter and distractor embedding vectors \\
		$[\cdot]_+$ & ReLU function: $\max(\cdot, 0)$ \\
		$\mathbb{E}_{(x,y)\sim D}[\cdot]$ & Expectation over distribution $D$ \\
		$\mathbb{P}[\cdot]$ & Probability measure \\
		$\|\cdot\|$ & Euclidean norm \\
		$\|\cdot\|_F$ & Frobenius norm \\
		$G_x, G_y$ & Image gradients along the $x$ and $y$ axes \\
		$M(x,y)$ & Edge magnitude at pixel $(x,y)$ \\
		$T_H, T_L$ & High and low thresholds for edge detection \\
		$\sigma$ & Gaussian smoothing parameter \\
		$R$ & Random projection matrix \\
		$h(\cdot)$ & Random projection hash function \\
		\bottomrule
	\end{tabular}
\end{table}

\subsection{Federated Learning}
Federated Learning (FL) is a distributed machine learning approach that enables multiple clients to train models locally and upload parameter updates to a central server for aggregation, avoiding direct sharing of raw data and effectively protecting data privacy \cite{smithFederatedMultiTaskLearning2017}. In a typical FL framework, assume there are $K$ clients participating in the training, and each client $i$ holds a local dataset $D_{i}=\{(x_{j},y_{j})\}_{j=1}^{|D_{i}|}$. The global model optimization objective is:

\begin{equation}
	\min_{w}F(w)=\frac{1}{K}\sum_{i=1}^{K}F_{i}(w),
\end{equation}

where $F_{i}(w)=\frac{1}{|D_{i}|}\sum_{(x_{j},y_{j})\in D_{i}}\ell(w;x_{j},y_{j})$ is the local loss function of client $i$, and $\ell$ is the loss function. FL performs iterative training through the following three steps:

\begin{enumerate}
	\item Global Model Distribution: The server sends the global model parameters $w_{t}$ to the clients.
	\item Local Training: Clients update the model $\Delta w_{i}$ using their local dataset.
	\item Global Aggregation: The server aggregates the client updates to compute new global model parameters:
\end{enumerate}
$$
w_{t+1}=w_{t}+\eta\cdot\text{Aggregate}(\{\Delta w_{i}\}_{i=1}^{K}),
$$
where $\eta$ is the learning rate and $\text{Aggregate}$ is the aggregation operation (such as weighted averaging). FL’s distributed nature enhances data privacy but also makes it vulnerable to backdoor attacks and other covert threats.

\subsection{Backdoor Attacks}

The core of a backdoor attack is to stealthily implant a trigger in the global model so that the model outputs a pre-determined result under specific input conditions, while maintaining good performance on normal inputs \cite{gu2019badnets}. The goal can be formalized as:
\begin{enumerate}
	\item \textbf{Normal Performance}: The global model $f(w,x)$ retains high performance on normal input $x$:
	\begin{equation}
		\begin{split}
			\mathbb{E}_{(x,y)\sim D_{\text{benign}}}\ell(f(w,x),y) 
			&\approx \mathbb{E}_{(x,y)\sim D_{\text{benign}}}\ell(f(w_{\text{clean}},x),y)
		\end{split}
	\end{equation}
	where $D_{\text{benign}}$ is the benign data distribution, and $w_{\text{clean}}$ is the unpoisoned model parameters.
	\item \textbf{Backdoor Trigger}: When the input is injected with a trigger $T(x)$, i.e., $x'=T(x)$, the model outputs the attacker's target label $y^{*}$:
	$$
	f(w,x')=y^{*}.
	$$
\end{enumerate}
In the federated learning scenario, backdoor attacks are typically executed through poisoned sample injection and malicious updates uploaded to the server. Poisoned sample injection occurs when malicious clients inject poisoned samples $(x', y^{*})$ into their local dataset and participate in model training. Malicious updates involve manipulating the uploaded model updates $\Delta w_{i}$ to covertly influence the global model. Distributed backdoor attacks further enhance stealth and flexibility by utilizing multiple malicious clients to collaborate in injecting poisoned samples or triggers.

\subsection{System and Threat Model}
\label{subsec:threat_model}

To rigorously evaluate the proposed Federated Dynamic Backdoor Attack (FDBA), we establish a precise system and threat model detailing the roles, knowledge, and capabilities within the federated learning environment.

\textbf{System Model:} We consider a standard cross-device Federated Learning (FL) system consisting of two primary roles: a central parameter server and $N$ participating clients. In each communication round, the server distributes the global model to a randomly selected subset of clients. These clients perform local training using their private datasets and submit the updated model gradients back to the server. The server then aggregates these local updates (e.g., via FedAvg or robust aggregation rules) to update the global model.

\textbf{Threat Model (Knowledge and Capabilities):} 
\begin{itemize}
	\item \textbf{Attacker Role:} We assume a fraction of the participating clients are compromised and act as malicious attackers. These clients can collude to achieve a shared objective.
	\item \textbf{Knowledge:} The malicious clients possess \textit{white-box} access to their own local training data, local training processes, and the global model broadcasted by the server in each round. However, they have \textit{black-box} access to the benign clients and the server's exact aggregation mechanism.
	\item \textbf{Capabilities:} The attackers can arbitrarily manipulate their local datasets and optimize their local model parameters before uploading them to the server. Crucially, the attackers cannot compromise the central server, nor can they intercept or alter the communications of benign clients.
\end{itemize}

\subsection{Fine-grained Poisoning}
In federated learning, an attacker can implant a backdoor into the global model through fine-grained poisoning strategies that precisely target specific data regions and model parameters. This involves injecting carefully crafted poisoned samples $\{(x'_{i}, y^{*})\}$ with optimized placement and uploading strategically designed malicious model updates $\Delta w_{\text{malicious}}$. The goal is to achieve effective backdoor implantation through precise control over:

\begin{enumerate}
	\item \textbf{Guarantee Normal Performance}: The global model's performance on normal data remains unchanged or only slightly altered \cite{byzantinegradientdescentHowBackdoorFederated2020}.
	\item \textbf{Successful Backdoor Trigger}: The global model outputs the attacker's target label $y^{*}$ when the trigger input $x'$ is applied \cite{chen2017targeted}.
\end{enumerate}
The formal definition can be represented as:
Given the global model $f(w)$, the number of clients $K$, the malicious client subset $K_{\text{malicious}} \subset K$, and benign data distribution $D_{\text{benign}}$, and a trigger generation function $T(\cdot)$, the goal is first to maximize normal performance, i.e.,

\begin{equation}
	\max_{w}\,\mathbb{E}_{(x,y)\sim D_{\text{benign}}}[\ell(f(w,x),y)],
\end{equation}

while ensuring $\Delta \text{Perf} \leq \epsilon$, where $\Delta \text{Perf}$ represents the change in model performance, and $\epsilon$ is the allowed performance deviation. Secondly, maximize the backdoor success rate, i.e.,

\begin{equation}
	\max_{w}\,\mathbb{P}[f(w,x')=y^{*}], \quad x' = T(x).
\end{equation}

\section{Method}

As shown in the Figure \ref{FIG:Framework of the Few-shot Distributed Backdoor Attack.}, our fine-grained approach primarily involves the following four key steps with precise control mechanisms:
\begin{figure*}
	\centering
	\includegraphics[width=0.9\textwidth]{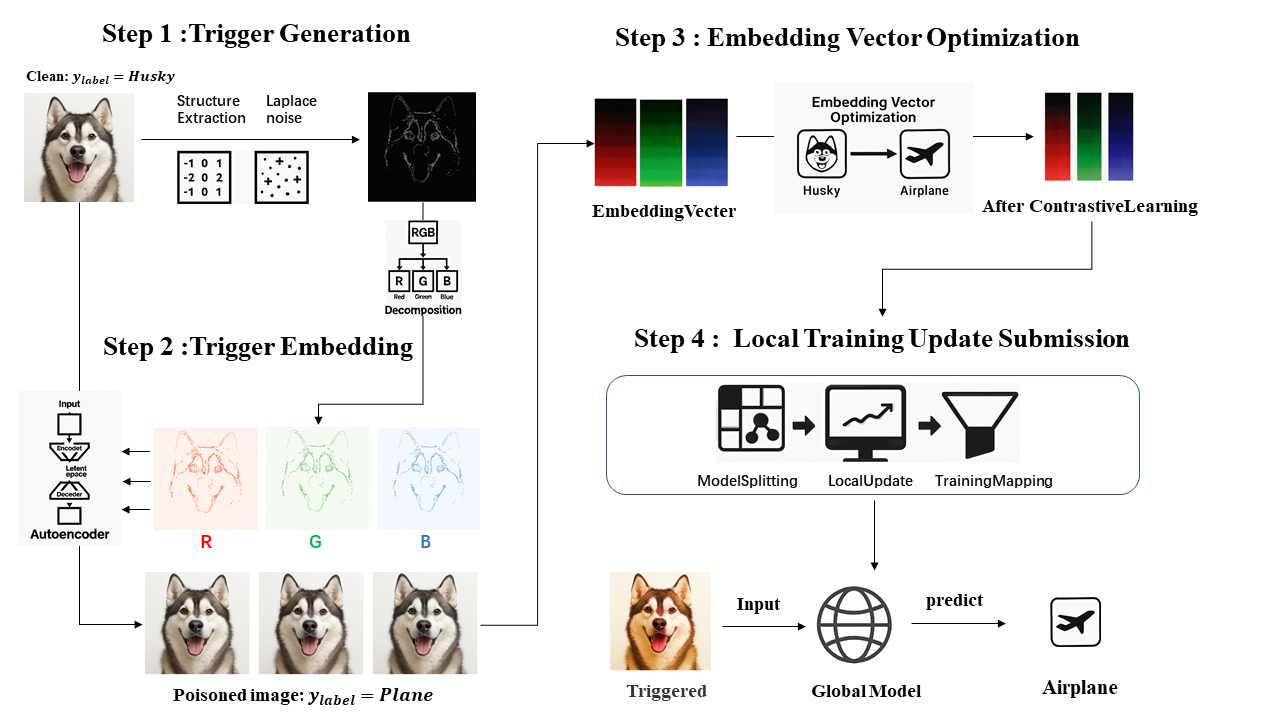}
	\caption{Framework of the Fine-grained Distributed Backdoor Attack.}
	\label{FIG:Framework of the Few-shot Distributed Backdoor Attack.}
\end{figure*}

\begin{enumerate}[1.]
	\item \textbf{Fine-grained Trigger Generation}: Edge features are precisely extracted based on natural image boundaries using adaptive thresholding, and Laplacian noise is strategically injected to generate highly targeted triggers with minimal footprint. The triggers are decomposed into the RGB three channels and distributed across different malicious clients to perform a distributed poisoning attack. This step enhances the stealthiness of the attack and its ability to dynamically adapt.
	\item \textbf{Trigger Embedding}: An autoencoder is used to embed sub-triggers into benign samples to generate poisoned samples. The embedding ensures that the generated poisoned samples visually resemble the original benign samples.
	\item  \textbf{Embedding Vector Optimization}: Contrastive learning is used to optimize the embedding vector of the poisoned samples, bringing it closer to the target class. This amplifies the interference caused by the poisoned samples to the model, achieving a reduction in the amount of poisoning required.
	\item \textbf{Distributed Backdoor Injection}: By separating the model structure and update mapping, we ensure that the updates from the malicious clients remain compatible with the global model.
\end{enumerate}

\subsection{Trigger Generation}
Trigger generation consists of three steps: Edge Structure Extraction, Global Trigger Construction, and Distributed Trigger Decomposition.

\subsubsection{Edge Structure Extraction}
In fine-grained distributed backdoor attacks, precisely targeting edge structures for malicious information injection offers significant advantages in terms of stealth and effectiveness. First, edge structures precisely localize critical regions within the data, enabling more efficient poisoning. Unlike traditional global perturbation methods, edge-based techniques focus on the boundaries of images or data to achieve accurate and high-efficiency poison embedding. Second, the extracted edge structures, after optimization, tend to be sparse, meaning that the embedded poisoned information does not substantially alter the overall data distribution. This greatly reduces the likelihood of detection by anomaly detection mechanisms and improves the stealthiness of the attack.

Edge extraction and optimization serve as the first step in our trigger construction. Malicious clients extract salient edge features from input data to construct dynamic triggers, which are then embedded into selected regions. The effect of this design on attack effectiveness and stealth is evaluated empirically in Section~\ref{subsec:poisoning_efficiency}.

In each round of federated learning, the central server distributes the global model to clients. Upon receiving the global model, a malicious client uses its local dataset $D_{i}=\{(x_{j}, y_{j})\}_{j=1}^{|D_{i}|}$ to generate poisoned samples. For a sample $x_{j} \in D_{i}$ with label $y_{j}$, the attacker predefines a target label $y^{*}$ to be triggered. The goal is to produce an edge map $T'(x_{j})$ as the region for poisoning, enabling covert embedding of triggers.

To extract edge structures, the malicious client performs the following steps:

First, the Canny edge detection algorithm is used to extract an edge map $T(x_{j})$ from each local sample. Then, Gaussian filtering is applied to smooth the image and suppress noise while preserving the dominant edges. The Gaussian kernel function is defined as:

\begin{equation}
	G(i,j)=\frac{1}{2\pi\sigma^{2}}e^{-\frac{i^{2}+j^{2}}{2\sigma^{2}}},
\end{equation}

where $\sigma$ is the smoothing parameter. Malicious clients can tune $\sigma$ to maintain edge clarity despite noise.

Next, the client calculates the gradient of the smoothed image to extract edge intensity and direction. Gradients in the horizontal and vertical directions are computed as:
\begin{equation}\label{eq:xG}
	G_{x}(x,y) = I_{\text{smooth}}(x+1,y) - I_{\text{smooth}}(x-1,y)
\end{equation}

\begin{equation}\label{eq:yG}
	G_{y}(x,y) = I_{\text{smooth}}(x,y+1) - I_{\text{smooth}}(x,y-1)
\end{equation}

Then, edge magnitude is computed as:

\begin{equation}\label{eq:M}
	M(x,y) = \sqrt{G_{x}^{2}(x,y) + G_{y}^{2}(x,y)}
\end{equation}

and edge direction as:

\begin{equation}\label{eq:theta}
	\theta(x,y) = \arctan\left(\frac{G_{y}(x,y)}{G_{x}(x,y)}\right)
\end{equation}

Following this, non-maximum suppression is applied to preserve only the local maxima along edge directions and suppress redundant edge responses. For each pixel, if its gradient magnitude $M(x,y)$ is greater than that of its neighboring pixels in the gradient direction $M_{\text{neighbor}}(x,y)$, it is retained; otherwise, it is suppressed:

\begin{equation}\label{eq:MGxy}
	M'(x,y) = 
	\begin{cases}
		M(x,y), & \text{if } M(x,y) > M_{\text{neighbor}}(x,y) \\
		0, & \text{otherwise}.
	\end{cases}
\end{equation}

Finally, double thresholding is used to classify strong and weak edges. With a high threshold $T_{H}$ and a low threshold $T_{L}$, a binary edge map $T(x_{j})$ is generated as:

\begin{equation}\label{eq:GT}
	T(x_{j})=
	\begin{cases}
		1, & \text{if } M'(x,y) > T_{H} \\
		0, & \text{if } M'(x,y) < T_{L}.
	\end{cases}
\end{equation}

This strategy prioritizes stronger local edge responses when constructing the trigger mask; its contribution is evaluated through the edge-extraction ablation in Table~\ref{tab:ablation}.

\subsubsection{Global Trigger Generation}

This step generates a global trigger by injecting perturbations into the edge structure $T(x_{j})$. The combination of edge maps and Laplacian noise provides localized, stochastic control over the trigger perturbation. Its effects on attack performance and visual quality are evaluated empirically rather than attributed to prior work.

To achieve this, we sample perturbations from a Laplace distribution, whose use as a calibrated noise distribution is well established \cite{dwork2006calibrating}. In FDBA, the parameters $\mu$ and $b$ control the location and scale of the stochastic trigger perturbation; the resulting stealth--effectiveness trade-off is assessed experimentally.

The poisoning information $t$ is generated according to the Laplace distribution, whose probability density function (PDF) is defined as:

\begin{equation}\label{eq:PDF}
	f(t|\mu,b) = \frac{1}{2b}\exp\left(-\frac{|t - \mu|}{b}\right)
\end{equation}

where $\mu$ is the location parameter, controlling the center of the noise, and $b$ is the scale parameter, which determines the spread. In our setup, we fix $\mu = 0$ to ensure symmetric noise and adjust $b$ empirically to balance stealthiness and effectiveness.

By integrating Laplacian noise into the edge structure, we can generate a dynamic and covert trigger capable of executing effective backdoor attacks in a federated learning setting. Specifically, for a given sample $x$, the poisoned information $t(x)$ is embedded into its extracted edge map $T'(x)$ as follows:

\begin{equation}\label{eq:t(x)}
	t(x) = \alpha \cdot T'(x) \cdot r(x)
\end{equation}

where:
$\alpha$ is the strength control parameter, regulating the impact of the noise on the trigger;
$T'(x)$ is the sparse edge mask generated by the edge extraction module, guiding precise injection locations;
$r(x) \sim \text{Laplace}(\mu, b)$ is Laplacian noise used to increase stealth.

By adjusting these parameters ($\alpha$,$T'(x)$,$r(x)$), attackers can adapt the trigger to different scenarios while ensuring high stealth and effectiveness. In federated learning environments, where client data distributions may vary, it is necessary to dynamically adapt the poisoning strength. The introduction of the strength control parameter $\alpha$ allows flexible tuning of the trigger's influence on the global model, thus reducing the risk of detection while maintaining the attack's impact.

To ensure dimensional compatibility between $T'(x)$ and $r(x)$, we sample $r(x)$ with the same spatial dimensions as $T'(x)$. By tuning $\alpha$, attackers can strike a balance between stealthiness and poisoning effectiveness in distributed learning settings.

\subsubsection{Distributed Trigger Decomposition}

We propose a multi-channel global trigger decomposition strategy that separates the trigger across the RGB channels, allowing different malicious clients to embed different components of the trigger in a distributed manner while preserving the overall effect. The core idea behind this RGB-based decomposition strategy lies in leveraging the collaborative nature of distributed attacks. By decomposing the global trigger into its R, G, and B color components and distributing them across different malicious clients, the trigger is embedded in a distributed manner while preserving the natural color statistics of the host images. This channel-wise decomposition prevents any single client from holding the complete trigger, enhancing stealthiness and hindering anomaly detectors that rely on identifying coordinated, monolithic trigger patterns.

The RGB decomposition of the poisoned edge feature map $T'(x)$ is defined as:

\begin{equation}\label{eq:T(x)RGB}
	T'(x) = T'_R(x) + T'_G(x) + T'_B(x)
\end{equation}

where:
$T'_R(x)$, $T'_G(x)$, and $T'_B(x)$ represent the sub-triggers distributed over the R, G, and B channels, respectively;
Each channel-specific feature map is generated via weighted assignment and perturbation mechanisms.

To ensure the effective fusion of the three sub-triggers into a coherent attack, a channel-wise weighting strategy is employed to modulate the contribution of each edge map:

\begin{equation}\label{eq:TCx}
	T'_C(x) = w_C \cdot T'(x)
\end{equation}

where $C \in \{R, G, B\}$ denotes the current channel, and $w_C$ is the weight assigned to that channel. The weights are determined through grid search and constrained such that:

\begin{equation}\label{eq:weights_sum}
	\sum_{C \in \{R,G,B\}} w_C = 1
\end{equation}

This decomposition enables distributed poisoning across multiple clients while preserving the structure of the original trigger, since the three channel components jointly reconstruct it. RGB decomposition is a color-channel operation and is not a spatial-frequency decomposition. Its benefit is not improved image-level imperceptibility but reduced similarity among the updates uploaded by different malicious clients, which is precisely the signal that similarity-based detectors such as FoolsGold rely on (Table~\ref{tab:detection_metrics}).

\subsection{Trigger Embedding}
In the distributed backdoor attack scenario, the global trigger $T(x)$ has been decomposed into multiple sub-triggers $T'_R(x), T'_G(x), T'_B(x)$, which are assigned to different malicious clients. The decomposition operates over the color channels of the trigger: the trigger is split into its R, G and B components, and no spatial-frequency interpretation is implied. By decomposing the trigger into multiple sub-components, each can be independently embedded into the client's data while preserving the overall functionality of the trigger. This approach has natural advantages in a distributed environment, as each client only processes part of the trigger information, making it difficult for the server to directly detect the distributed trigger, thereby enhancing the stealthiness of the attack.

The goal of this section is to embed sub-triggers into benign samples using an autoencoder to generate poisoned samples. Assume there are $N$ malicious clients, and each client $i$ is assigned a sub-trigger $T'_C(x)$ corresponding to channel $C \in \{R, G, B\}$. Client $i$ embeds the assigned sub-trigger into its local benign sample to generate a poisoned sample:

\begin{equation}\label{eq:xij_prime}
	x'_{i,j} = x_{j} + \lambda_{C} \cdot T'_{C}(x_{j})
\end{equation}

where $x_{j}$ is the original benign sample from the client's local dataset; $\lambda_{C}$ is a parameter controlling the strength of the sub-trigger, used to adjust the degree of influence the sub-trigger has on the sample. By decomposing and distributing the global trigger among multiple clients, the stealthiness of the attack is improved, as each client holds only a portion of the trigger information, making it harder to detect centrally.

Inspired by \cite{feng2019learning}, this paper adopts an autoencoder for trigger embedding. The autoencoder consists of two parts: an encoder $E$ and a decoder $D$. By learning a low-dimensional representation of the data, the autoencoder can embed sub-triggers while preserving the main features of the data. Specifically: the encoder maps the input sample into a latent variable space to capture the key features of the data; the decoder reconstructs the sample from the latent variables, ensuring that the generated poisoned sample is visually consistent with the original sample. Embedding the sub-trigger in the latent space enables stealthy embedding without significantly altering the data distribution. The design is as follows:

\begin{enumerate}
	\item  Encoder: maps the input sample $x$ to the latent space: $z = E(x), \quad z \in \mathbb{R}^{d}$.
	\item  Decoder: reconstructs the latent variable $z$ back to the original sample: $\hat{x} = D(z)$.
\end{enumerate}

The sub-trigger is embedded in the latent space of the encoder-generated latent variable $z$, resulting in a perturbed latent variable:$z' = z + \lambda \cdot T'_{C}(x),$

where $\lambda$ is the embedding strength control parameter. The perturbed latent variable $z'$ is decoded into a poisoned sample: $x' = D(z')$

By jointly optimizing the reconstruction loss and trigger embedding loss, a balance between stealthiness and attack effectiveness can be achieved. The reconstruction loss ensures the stealthiness of the poisoned samples, making them difficult to detect by conventional anomaly detection mechanisms. The trigger embedding loss ensures effective embedding of the sub-trigger in the latent space, thereby enabling the backdoor behavior under specific inputs. The optimization objective of the autoencoder includes the following two loss functions:

\begin{itemize}
	\item Reconstruction Loss: ensures that the poisoned sample $x'$ is visually close to the original sample $x$, to maintain stealthiness. It is defined as: $\mathcal{L}_{\text{recon}} = \|x - x'\|^2$.
	\item Trigger Embedding Loss: ensures that the sub-trigger $T'_{C}(x)$ is effectively embedded in the latent space while preserving the attack trigger effect. The goal is to maximize the similarity between the latent representation of the poisoned sample and the target trigger embedding: $\mathcal{L}_{\text{trigger}} = \|E(T'_{C}(x')) - E(T'_{C}(x_{p}))\|^2,$.
	where $x_{p}$ is a sample from the target class (promoter). The final optimization objective is:
	\begin{equation}
		\mathcal{L}_{\text{total}} = \alpha \cdot \mathcal{L}_{\text{recon}} + \beta \cdot \mathcal{L}_{\text{trigger}},
	\end{equation}
	where $\alpha$ and $\beta$ are weight parameters used to balance stealthiness and trigger embedding.
\end{itemize}

Combining the above sub-trigger embedding and autoencoder optimization, the poisoned sample generation formula is:
\begin{equation}
	x'_{j} = D\left(E(x_{j}) + \lambda \cdot T'_{C}(x_{j})\right),
\end{equation}
where $E(x_{j})$ and $D$ are the encoder and decoder of the autoencoder; $T'_{C}(x_{j})$ is the sub-trigger assigned to the client; $\lambda$ is the embedding strength parameter.

\subsection{Embedding Vector Optimization}

Embedding vector optimization of poisoned samples is a key step in enabling low-poisoning sample attacks. It includes two main stages: contrastive learning and random projection. Contrastive learning pulls the embedding vectors of poisoned samples closer to those of the target class while pushing them away from other classes. This operation enables benign samples of the target class to amplify the effect of poisoned samples, reducing the overall poisoning rate required for the attack. Furthermore, to address the high computational complexity of contrastive learning, we adopt random projection to map high-dimensional embedding vectors into a low-dimensional space, simplifying the model’s handling complexity for poisoned data. This optimization strategy effectively amplifies the impact of poisoned samples on the global model, maintaining attack efficacy even with a limited number of poisoned instances. The full process consists of embedding vector extraction, optimization, and random projection.

\subsubsection{Embedding Vector Extraction}

This paper adopts the feature extraction module of ResNet-18 as the base network for embedding vector extraction, removing the fully connected classification layer and using the output from the global average pooling (GAP) layer before the final FC layer to produce high-dimensional feature representations of input samples. The embedding extraction architecture includes:

\begin{enumerate}
	\item Input Layer: Input images of size $224 \times 224 \times 3$, processed by a $7 \times 7$ convolution followed by a $2 \times 2$ max-pooling layer to extract low-level features.
	\item Residual Modules: Four stages of residual blocks, each with several convolution layers and skip connections:
	\begin{itemize}
		\item Stage 1: 64 filters of size $3 \times 3$, two residual blocks
		\item Stage 2: 128 filters, two residual blocks
		\item Stage 3: 256 filters, two residual blocks
		\item Stage 4: 512 filters, two residual blocks
	\end{itemize}
	\item GAP Layer: The final output feature map is compressed to a 512-dimensional vector via global average pooling: $e = \text{GAP}(f_{\text{conv4}}(x)),$, where $f_{\text{conv4}}(x)$ denotes the output of stage 4.
	\item Embedding Vector Output: The 512-dimensional output $e$ represents the high-dimensional feature of sample $x$. To better adapt to the task of distributed backdoor attacks, the ResNet-18 backbone is enhanced with a channel attention mechanism in each residual block and an additional embedding head (e.g., fully connected or MLP) after the GAP layer: $e = \text{MLP}(\text{GAP}(f_{\text{conv4}}(x))).$
\end{enumerate}

\subsubsection{Embedding Vector Optimization}

Embedding vector optimization aims to enhance specific distributional characteristics by adjusting pairwise sample distances. The core idea is to enforce similar samples to have similar representations and dissimilar samples to have different ones. Hadsell et al. introduced a contrastive loss that pulls similar pairs together and pushes dissimilar pairs apart in the learned representation space \cite{hadsell2006dimensionality}.

This module applies a contrastive learning objective to optimize the embedding distribution of poisoned samples $e'_i$ with respect to promoters $e_p$ and distractors $e_n$. The objectives include:

\begin{enumerate}
	\item \textbf{Attraction to Target Class}: Minimize the distance between poisoned sample embeddings and the promoter class:
	\begin{equation}
		\mathcal{L}_{\text{positive}} = \frac{1}{N} \sum_{i=1}^{N} \|e'_i - e_p\|^2,
	\end{equation}
	where $N$ is the number of poisoned samples.
	\item \textbf{Repulsion from Non-Target Classes}: Maximize the distance from distractor embeddings:
	\begin{equation}
		\mathcal{L}_{\text{negative}} = -\frac{1}{N} \sum_{i=1}^{N} \|e'_i - e_n\|^2.
	\end{equation}
	\item \textbf{Combined Contrastive Objective}: Incorporate a margin $m$ to control the separation between classes,The contrastive learning objective in Eq.~(18) serves a dual purpose: it not only optimizes the embedding distribution but also provides the gradient amplification factor $\gamma$ crucial for our theoretical analysis (see Theorem~\ref{thm:main}).
	\begin{equation}
		\mathcal{L}_{\text{contrastive}} = \frac{1}{N} \sum_{i=1}^{N} \left[\|e'_i - e_p\|^2 - \|e'_i - e_n\|^2 + m\right]_+,
	\end{equation}
	where $[\cdot]_+$ is the ReLU function that enforces non-negativity by setting negative values to zero.
\end{enumerate}

\subsubsection{Random Projection}
Since embedding vector optimization often occurs in high-dimensional space (e.g., 512D), computing distances for all pairs is computationally expensive. In federated settings, this becomes even more demanding. As shown in $\mathcal{L}_{\text{positive}}$ and $\mathcal{L}_{\text{negative}}$, each operation is $O(d)$, leading to a total complexity of $O(Nd)$.

To mitigate this, we introduce Random Projection Hashing (RPH) \cite{shi2009hash}, which projects high-dimensional vectors into lower-dimensional space:

\begin{enumerate}
	\item \textbf{Random Gaussian Matrix}: Generate $R \in \mathbb{R}^{k \times d}$, with $R_{ij} \sim \mathcal{N}(0,1)$, where $k \ll d$.
	\item \textbf{Low-Dimensional Projection}: Project high-dimensional vector $e$ to $h(e) \in \mathbb{R}^k$: $h(e) = \text{sign}(R \cdot e),$, where $\text{sign}(\cdot)$ binarizes the result, reducing both storage and compute cost. The contrastive loss becomes:
	\begin{equation}
		\begin{aligned}
			\mathcal{L}_{\text{contrastive-hash}} = \frac{1}{N} \sum_{i=1}^{N} \Bigl[ & \|h(e'_i) - h(e_p)\|^2 \\
			& - \|h(e'_i) - h(e_n)\|^2 + m \Bigr]_+.
		\end{aligned}
	\end{equation}
	The Johnson-Lindenstrauss Lemma \cite{johnson1984extensions} ensures that distances are approximately preserved in the projected space. For any set $\{e_1, ..., e_N\} \subset \mathbb{R}^d$, there exists $R \in \mathbb{R}^{k \times d}$ such that:
	\begin{equation}
		(1 - \epsilon) \|e_i - e_j\|^2 \leq \|h(e_i) - h(e_j)\|^2 \leq (1 + \epsilon) \|e_i - e_j\|^2,
	\end{equation}
	with $\epsilon$ typically set to a small value (e.g., 0.1), and $k = O(\frac{\log N}{\epsilon^2})$. The original complexity is $O(Nd)$; after projection, the complexity becomes $O(Ndk + Nk)$, and since $k \ll d$, it approximates $O(Nk)$, significantly reducing computation.
\end{enumerate}
In federated learning, optimizing poisoned sample embeddings is critical to improving attack effectiveness. Poisoned embeddings are pulled closer to target class semantics while staying distant from others. Random projection significantly reduces computational costs while preserving semantic relationships, thereby enhancing attack robustness and efficiency—even with only a few poisoned samples.

\subsection{Distributed Backdoor Injection}

As malicious clients train using optimized embedding vectors, discrepancies arise in model update structures compared to benign clients. To ensure structural consistency, the attack process is divided into three phases: model structure decoupling, local training, and update mapping. Malicious clients split their local models into an additional structure (for processing embedding vectors) and a globally consistent part (identical to the global model architecture). A mapping strategy is used to ensure the uploaded updates are statistically close to benign ones, thereby evading anomaly detection mechanisms.

\subsubsection{Model Structure Decoupling}

The local model is divided into two functional modules. The additional structure is responsible for converting the embedding vector $e'_i \in \mathbb{R}^d$ into an input format compatible with the global model (e.g., image feature $H \times W \times C$). This is achieved via two operations:

First, a fully connected layer projects the embedding vector to a lower-dimensional representation:

\begin{equation}
	h_{\text{adapted}} = \text{ReLU}(W_{\text{fc}} \cdot e'_i + b_{\text{fc}}),
\end{equation}

where $W_{\text{fc}} \in \mathbb{R}^{d' \times d}$ and $b_{\text{fc}} \in \mathbb{R}^{d'}$ are the weights and bias, and $d'$ is the reduced dimension.

Second, a transposed convolution layer reconstructs the reduced features into a fake input:

\begin{equation}
	x'_{\text{fake}} = \text{ConvTranspose}(h_{\text{adapted}}),
\end{equation}

whose shape matches that of the global model input $x_i$.

The globally consistent part has an architecture strictly identical to the global model on the server and processes $x'_{\text{fake}}$ to produce the prediction $f_{\text{global}}(x'_{\text{fake}})$.

\subsection{Local Training}

The training process consists of two phases:
\begin{enumerate}
	\item \textbf{Input Generation}: The additional structure converts the embedding vector into a fake input: $x'_{\text{fake}} = f_{\text{adapt}}(e'_i),$, where $f_{\text{adapt}}$ represents the entire operation of the additional structure.
	\item  \textbf{Model Optimization}: The fake input $x'_{\text{fake}}$ is passed through the globally consistent part $f_{\text{global}}$ and optimized via an attack loss function: 
	\begin{equation}
		\mathcal{L}_{\text{attack}} = \frac{1}{|D'|} \sum_{i=1}^{|D'|} \ell(f_{\text{global}}(x'_{\text{fake}}), y^*),
	\end{equation}
	where $\ell(\cdot)$ is the loss function (e.g., cross-entropy), $y^*$ is the target backdoor label, and $D'$ is the local dataset of the malicious client. This process forces the model to produce predictions aligned with the backdoor target label.
\end{enumerate}

\subsubsection{Update Mapping}

After training, the malicious client generates the final update in three steps:

\begin{enumerate}
	\item \textbf{Update Separation}: Compute updates for the additional structure and globally consistent part:
	\begin{equation}
		\Delta w_{\text{adapt}} = w_{\text{adapt}}^{t+1} - w_{\text{adapt}}^t,\quad 
		\Delta w_{\text{global}} = w_{\text{global}}^{t+1} - w_{\text{global}}^t.
	\end{equation}
	\item \textbf{Update Alignment}: Only retain $\Delta w_{\text{global}}$ and align it with a simulated benign update $\Delta w_{\text{benign}}$ using a mixing ratio $\beta \in [0,1]$:
	\begin{equation}
		\Delta w_{\text{aligned}} = \beta \cdot \Delta w_{\text{benign}} + (1 - \beta) \cdot \Delta w_{\text{global}}.
	\end{equation}
	The simulated benign update is computed as the average gradient over benign data:
	\begin{equation}
		\Delta w_{\text{benign}} = \frac{1}{|D|} \sum_{i=1}^{|D|} \nabla_w \ell(f_{\text{global}}(x_i), y_i),
	\end{equation}
	
	where $x_i$ and $y_i$ are benign inputs and corresponding labels.
	\item \textbf{Final Upload}: The malicious client uploads:
	\begin{equation}
		\Delta w_{\text{final}} = \Delta w_{\text{aligned}},
	\end{equation}
	ensuring the update conforms to the global model structure and closely mimics the distribution of benign updates.
	
\end{enumerate}

Through model structure decoupling, the malicious client’s local model includes additional modules but only uploads updates from the globally consistent part, preventing incompatibility issues. Update alignment ensures statistical similarity with benign updates, significantly reducing the risk of detection by anomaly detection algorithms. This strategy ensures the feasibility of federated aggregation while successfully injecting backdoors.

\subsection{Summary and Algorithm Description of FDBA}
This chapter provides a comprehensive overview of the proposed method. As illustrated in Algorithm \ref{alg:attack}, the approach first employs a multimodal trigger generation mechanism that combines Canny edge detection with Laplace noise to produce dynamic triggers, which are then injected in a distributed manner using an RGB three-channel decomposition strategy. Next, an adaptive embedding method based on an autoencoder is designed to embed triggers effectively while maintaining visual stealth. Contrastive learning and random projection techniques are introduced to optimize the distribution of poisoned samples in the embedding space, enhancing attack effectiveness while reducing computational complexity. During execution, a model structure decoupling and update mapping mechanism is adopted. By separating additional network structures and aligning parameter update distributions, the method effectively evades anomaly detection in federated learning. Through the coordinated operation of four modules—dynamic trigger generation, distributed injection, embedding space optimization, and parameter update obfuscation—this framework ensures high attack success rates while significantly reducing the required amount of poisoned data.

\begin{algorithm}[t]
	\caption{Fine-grained Distributed Backdoor Attack in Federated Learning}
	\label{alg:attack}
	\begin{algorithmic}[1]
		\Procedure{Trigger Generation}{$x_j$}
		\State Extract edge map $T(x_j)$ via Canny detector
		\State Apply Gaussian filter: $I_{\text{smooth}} \gets G \ast x_j$
		\State Compute gradients:
		\State \quad $G_x \gets I_{\text{smooth}}(x+1,y) - I_{\text{smooth}}(x-1,y)$
		\State \quad $G_y \gets I_{\text{smooth}}(x,y+1) - I_{\text{smooth}}(x,y-1)$
		\State \quad $M(x,y) \gets \sqrt{G_x^2 + G_y^2}$
		\State Non-maximum suppression: $M^{\prime}(x,y) \gets \text{Threshold}(M)$
		\State Generate edge mask: $T(x_j) \gets \text{DualThresholding}(M^{\prime})$
		\State Inject noise:
		\State \quad $T^{\prime}(x_j) \gets T(x_j) + \alpha \cdot \text{Laplace}(0,b)$
		\State Channel decomposition:
		\State \quad $T^{\prime}_R, T^{\prime}_G, T^{\prime}_B \gets T^{\prime}(x_j)$
		\EndProcedure
		
		\Procedure{Trigger Embedding}{$x_j, T^{\prime}_C$}
		\State Feature encoding: $z \gets E(x_j)$
		\State Latent space perturbation: $z^{\prime} \gets z + \lambda \cdot T^{\prime}_C$
		\State Sample generation: $x^{\prime}_j \gets D(z^{\prime})$
		\State Loss optimization:
		\State \quad $\mathcal{L}_{\text{total}} \gets \alpha\mathcal{L}_{\text{recon}} + \beta\mathcal{L}_{\text{trigger}}$
		\EndProcedure
		
		\Procedure{Embedding Optimization}{$x^{\prime}_j$}
		\State Feature extraction: $e^{\prime}_j \gets \text{ResNet-18}(x^{\prime}_j)$
		\State Contrastive learning:
		\State \quad $\mathcal{L}_{\text{contrast}} \leftarrow \sum_{i=1}^N[\|e'_i - e_p\|^2 - \|e'_i - e_n\|^2 + m]_+$ 
		// Amplification factor $\gamma$
		\State Feature binarization: $h(e^{\prime}_j) \gets \text{sign}(R \cdot e^{\prime}_j)$
		\EndProcedure
		
		\Procedure{Distributed Injection}{$D^{\prime}, y^*$}
		\State Model separation:
		\State \quad $f_{\text{adapt}}, f_{\text{global}} \gets \text{SplitModel}(D^{\prime})$
		\State Fake input generation: $x^{\prime}_{\text{fake}} \gets f_{\text{adapt}}(e^{\prime}_j)$
		\State Attack loss computation:
		\State \quad $\mathcal{L}_{\text{attack}} \gets \ell(f_{\text{global}}(x^{\prime}_{\text{fake}}), y^*)$
		\State Update alignment:
		\State \quad $\Delta w_{\text{aligned}} \gets \beta\Delta w_{\text{benign}} + (1-\beta)\Delta w_{\text{global}}$
		\State Parameter upload: Upload $\Delta w_{\text{aligned}}$ to server
		\EndProcedure
	\end{algorithmic}
\end{algorithm}

\section{Theoretical Analysis}

This section presents an analytical characterization of FDBA's poisoning efficiency relative to DBA under the modeling assumptions stated in Appendix~\ref{sec:Proof}.

\subsection{Problem Formulation}

Consider a federated learning system with $K$ clients, where $K_m \subset K$ are malicious clients. Let $p \in [0,1]$ denote the poisoning ratio (fraction of poisoned samples), and let $\text{ASR}(p)$ denote the attack success rate as a function of $p$.

\begin{definition}[Attack Success Rate]
	The attack success rate is defined as:
	\begin{equation}
		\text{ASR}(p) = \mathbb{P}[f_\theta(T(x)) = y^* | x \sim \mathcal{D}_\text{test}]
	\end{equation}
	where $f_\theta$ is the global model, $T(\cdot)$ is the trigger function, and $y^*$ is the target label.
\end{definition}

\subsection{Key Result}

Our main theoretical contribution is formalized in the following theorem:

\begin{theorem}[Model-Based Poisoning Efficiency of FDBA]
	\label{thm:main}
	Under the convexity and bounded-gradient assumptions and the cumulative-influence model specified in Appendix~\ref{sec:Proof}, the poisoning-ratio relation at equal modeled attack success rate is:
	\begin{equation}
		\rho = \frac{p_{\text{FDBA}}}{p_{\text{DBA}}} = \frac{1}{1 + \gamma \cdot \alpha}
	\end{equation}
	where $\gamma > 0$ is the contrastive learning gain and $\alpha \in (0,1]$ is the edge structure preservation factor.
	And $\rho$ represents the poisoning ratio reduction factor, indicating that FDBA requires only $\rho$ fraction of the poisoning samples needed by DBA.
\end{theorem}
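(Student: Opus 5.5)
The relation is an identity inside a modeled setting. The natural route is to make the cumulative-influence model from Appendix~\ref{sec:Proof} explicit, express the modeled attack success rate as a monotone function of one scalar ``effective poisoning mass,'' and then equate that mass between the two attacks.

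\textbf{Step 1: cumulative-influence model.} Under this model, each poisoned sample contributes a gradient component toward the backdoor direction. Its magnitude is bounded using the bounded-gradient assumption. By convexity of the local loss, the backdoor-relevant progress of the global model after aggregation is a nondecreasing function of the summed projected contributions. So I would write the modeled success rate as
\[
\text{ASR}(p) = \Phi\bigl(p \cdot I\bigr),
\]
where $\Phi$ is strictly increasing and $I$ is the per-sample influence along the backdoor direction. For DBA this gives $I_{\text{DBA}} = I_0$, the influence of a decomposed pixel sub-trigger.

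\textbf{Step 2: per-sample influence of FDBA.} I would split the FDBA influence into two parts.
\begin{itemize}
\item \emph{Direct influence.} The poisoned sample itself supplies the baseline $I_0$.
\item \emph{Indirect influence from target-class samples.} The contrastive objective $\mathcal{L}_{\text{contrastive}}$ pulls $e'_i$ toward $e_p$. The benign target-class samples therefore produce gradients aligned with the backdoor direction, and the appendix model quantifies this alignment by the gain $\gamma$.
\item \emph{Edge preservation.} Only the fraction $\alpha\in(0,1]$ of the trigger structure that survives edge extraction and channel decomposition carries this aligned signal. The indirect term is thus scaled by $\alpha$.
\end{itemize}
Because the model is additive (linear superposition of influences), this yields
\[
I_{\text{FDBA}} = I_0(1+\gamma\alpha).
\]

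\textbf{Step 3: equal modeled ASR.} Setting $\text{ASR}_{\text{FDBA}}(p_{\text{FDBA}}) = \text{ASR}_{\text{DBA}}(p_{\text{DBA}})$ and using that $\Phi$ is strictly increasing (hence injective), I get
\[
p_{\text{FDBA}} I_0 (1+\gamma\alpha) = p_{\text{DBA}} I_0 .
\]
Dividing by $I_0>0$ gives $\rho = (1+\gamma\alpha)^{-1}$. Since $\gamma>0$ and $\alpha>0$, it follows that $\rho<1$, which is the claimed reduction.

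\textbf{Main obstacle.} The hard part is Step 2: justifying that the amplification enters \emph{multiplicatively} as $\gamma\alpha$ and \emph{additively} with the baseline. This is not derivable from convexity alone. I would first take it as a definitional property of $\gamma$ and $\alpha$ within the cumulative-influence model. Then I would argue consistency, by a first-order (linearized) expansion of the aggregated update around $w_t$, that the cross-term between target-class gradients and poisoned-sample gradients is proportional to the inner product of their embeddings. The contrastive margin $m$ lower-bounds this inner product, and its restriction to the edge mask $T'(x)$ contributes the factor $\alpha$.

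The statement is only as strong as that modeling choice. The proof should therefore present the result as exact within the model, and leave empirical validation to the experiments.
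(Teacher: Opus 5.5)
Within the model your argument is correct. It has the same skeleton as the paper's: an influence linear in $p$, an FDBA amplification factor $(1+\gamma\alpha)$ that is posited rather than derived, a monotone influence-to-ASR map, and cancellation after equating influences. You are right that the product form is the unproved step; the paper simply asserts it too.

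The real difference is where $\gamma$ comes from. In the paper, $1+\gamma$ is (Lemma~\ref{lem:amplification}) the ratio $\|g_i^{\text{FDBA}}\|/\|g_i^{\text{DBA}}\|$ of each poisoned sample's \emph{own} gradient norm once $\lambda\mathcal{L}_{\text{contrast}}$ is added. This ratio enters $I(p,T)=\sum_{t}\sum_{i\in K_m} w_i^{(t)}\,p\,\|g_i^{(t)}\|$ multiplicatively. Proportionality to $p$, and hence a ratio independent of the target ASR, is therefore automatic. The paper then attaches $\alpha$ by a one-line assertion and uses a specific sigmoid for the ASR map.

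You instead route $\gamma$ through benign target-class gradients, and nothing in that mechanism makes the indirect term scale with $p$. Those samples are present whatever $p$ is, so read literally the term is an additive offset $C$. Equal ASR would then give $p_{\text{FDBA}}=p_{\text{DBA}}-C/I_0$, a target-dependent ratio rather than $(1+\gamma\alpha)^{-1}$. Your proof goes through only because you make the per-poisoned-sample charging definitional; the paper's attribution is the one under which that choice is natural. In return, your arbitrary strictly increasing $\Phi$ is a mild improvement: it shows that only injectivity of the influence-to-ASR map is used, not the sigmoid form.

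Two small corrections. First, convexity does not imply that ASR is monotone in the summed contributions, so state $\Phi$ as a modeling assumption, as the paper does with its sigmoid. Second, a vanishing hinge term gives $2\langle e'_i, e_p-e_n\rangle \ge m+\|e_p\|^2-\|e_n\|^2$, which bounds the inner product with $e_p-e_n$, not with $e_p$ alone.
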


The proof relies on two key mechanisms:
\begin{enumerate}
	\item \textbf{Gradient Amplification}: Contrastive learning amplifies the gradient contribution of poisoned samples by factor $(1 + \gamma)$.
	\item \textbf{Structural Coherence}: Edge-based triggers maintain consistency across heterogeneous data distributions with preservation factor $\alpha$.
\end{enumerate}

\subsection{Practical Implications}

For illustrative parameter values ($\gamma = 0.3$, $\alpha = 0.8$), we obtain:
\begin{equation*}
	\rho = \frac{1}{1 + 0.3 \times 0.8} \approx 0.8065,
\end{equation*}
corresponding to an illustrative model-based reduction of approximately 19.4\% in poisoning ratio.

The empirical estimates range from 37.4\% to 48.4\% (Section~\ref{subsec:poisoning_efficiency}). Because $\gamma$ and $\alpha$ are not independently estimated from the experiments, this comparison is interpreted as qualitative consistency with the analytical relation rather than numerical validation of a universal lower bound. The derivation is provided in Appendix~\ref{sec:Proof}.

\section{Experiments}

Evaluating fine-grained backdoor attacks in federated learning requires careful consideration of diverse scenarios, defense mechanisms, and the precision of attack targeting strategies. This section presents comprehensive experiments designed to validate FDBA's effectiveness while acknowledging its limitations. Our evaluation spans multiple datasets, federated configurations, and defense strategies to provide a thorough understanding of the proposed framework's capabilities and boundaries.

\subsection{Experimental Configuration}

Our experimental setup mirrors realistic federated learning deployments while maintaining scientific rigor. We conduct experiments across three benchmark datasets: CIFAR-10 and CIFAR-100 for controlled evaluation, and a downsampled ImageNet-32 variant for high-complexity scenarios. The choice of these datasets reflects the progression from simple classification tasks to more challenging real-world image recognition problems commonly encountered in federated applications \cite{li2020federated}.

The federated learning environment simulates 100 participating clients, following the scale commonly adopted in federated learning research \cite{kairouz2021advances}. In each communication round, we randomly select 10\% of clients for model updates, reflecting the partial participation typical in practical federated systems. Data heterogeneity is modeled using the Dirichlet distribution with concentration parameter $\alpha \in \{0.1, 0.5, 1.0, 2.0\}$, where smaller values indicate higher non-IID characteristics \cite{hsu2019measuring}. This approach allows us to systematically evaluate attack performance across varying degrees of data heterogeneity.

The attack configuration considers malicious client ratios ranging from 5\% to 20\%, with individual poisoning ratios varying between 1\% and 10\%. These parameters reflect realistic threat scenarios where attackers have limited control over the federated network. Our trigger generation employs Canny edge detection with thresholds $T_H = 0.25$ and $T_L = 0.15$, chosen to balance edge sensitivity with noise robustness. The Laplacian noise injection uses parameters $\mu = 0$ and $b = 0.35$ for CIFAR datasets, with $b = 0.45$ for ImageNet-32 to accommodate higher complexity.

Our primary comparisons focus on DBA, a distributed trigger-decomposition attack, and Neurotoxin, which targets parameters that change less during benign training to improve backdoor durability \cite{zhang2022neurotoxin}.

Defense mechanisms span both traditional and advanced approaches. Byzantine-robust aggregation methods include Krum \cite{blanchard2017machine} and Multi-Krum, which select updates based on geometric distance measures. We implement FoolsGold \cite{fungMitigatingSybilsFederated2020}, which exploits the assumption that malicious clients produce similar gradient updates. FLAME \cite{nguyen2022flame} clusters and clips suspicious updates before aggregation. Additionally, we evaluate FedRecover \cite{cao2023fedrecover}, a post-detection recovery method that uses historical training information to reconstruct an accurate global model after poisoning, and differential privacy mechanisms with noise parameters $\epsilon \in \{0.1, 1.0, 10.0\}$ \cite{abadi2016deep}.

\subsection{Evaluation Metrics and Statistical Framework}

Our evaluation employs multiple complementary metrics to capture different aspects of attack effectiveness and stealth. The primary metric, Attack Success Rate (ASR), measures the probability that trigger-embedded samples are misclassified into the target class:

\begin{equation}
	\text{ASR} = \frac{|\{x_i : f(T(x_i)) = y^*, x_i \in \mathcal{D}_{\text{test}}\}|}{|\{x_i : T(x_i), x_i \in \mathcal{D}_{\text{test}}\}|} \times 100\%
\end{equation}

where $T(\cdot)$ represents the trigger function, $y^*$ is the target label, and $\mathcal{D}_{\text{test}}$ is the test dataset. Clean Accuracy (CA) ensures that backdoor injection does not compromise the model's primary functionality:

\begin{equation}
	\text{CA} = \frac{|\{x_i : f(x_i) = y_i, (x_i, y_i) \in \mathcal{D}_{\text{clean}}\}|}{|\mathcal{D}_{\text{clean}}|} \times 100\%
\end{equation}

To quantify visual stealth, we compute Peak Signal-to-Noise Ratio (PSNR) and Structural Similarity Index (SSIM):

\begin{equation}
	\text{PSNR} = 10 \log_{10} \left( \frac{\text{MAX}_I^2}{\text{MSE}(I, I')} \right)
\end{equation}

\begin{equation}
	\text{SSIM}(I, I') = \frac{(2\mu_I\mu_{I'}+C_1)(2\sigma_{II'}+C_2)}{(\mu_I^2+\mu_{I'}^2+C_1)(\sigma_I^2+\sigma_{I'}^2+C_2)}
\end{equation}

where $I$ and $I'$ represent original and poisoned images, respectively. Perturbation magnitude is measured using $L_0$ and $L_2$ norms to assess both sparsity and intensity of modifications.

Results are reported as means and standard deviations where repeated measurements are available. We do not report inferential significance tests because the retained result records contain aggregate summaries rather than the per-run observations required to reproduce such tests.

\subsection{Attack Effectiveness and Poisoning Efficiency}
\label{subsec:poisoning_efficiency}

The fundamental question driving our evaluation concerns FDBA's ability to achieve high attack success rates efficiently. To establish the core capability of FDBA, we first evaluate its attack effectiveness against baseline methods under constrained poisoning budgets, without the presence of active defense mechanisms. 

Under a standard baseline configuration of 10\% malicious clients and a 4\% poisoning ratio, FDBA demonstrates consistent superiority. Without defense mechanisms, FDBA achieves 82.4\% ASR, representing a gain of 12.6 percentage points over Neurotoxin (69.8\%) and 15.2 percentage points over DBA (67.2\%). This enhancement stems from our embedding optimization strategy, which amplifies the influence of individual poisoned samples through contrastive learning.

To further demonstrate FDBA's superior efficiency, we evaluate the attack success rates across different poisoning ratios. 

\begin{table}[t]
	\centering
	\caption{Attack Success Rate vs Poisoning Ratio on CIFAR-10 (No Defense)}
	\label{tab:poisoning_ratio}
	\footnotesize
	\begin{tabular}{c|c|c|c|c}
		\hline
		\textbf{Ratio} & \textbf{DBA} & \textbf{Neurotoxin} & \textbf{FDBA} & \textbf{Gain (p.p.)} \\
		\hline
		1\% & 32.1$\pm$6.8 & 35.4$\pm$6.9 & \textbf{51.2$\pm$5.3} & +15.8 \\
		2\% & 48.6$\pm$5.9 & 52.1$\pm$5.7 & \textbf{67.8$\pm$4.6} & +15.7 \\
		3\% & 59.7$\pm$5.2 & 62.8$\pm$5.0 & \textbf{75.9$\pm$4.2} & +13.1 \\
		4\% & 67.2$\pm$4.8 & 69.8$\pm$4.7 & \textbf{82.4$\pm$3.8} & +12.6 \\
		5\% & 74.1$\pm$4.5 & 76.3$\pm$4.4 & \textbf{87.6$\pm$3.5} & +11.3 \\
		6\% & 79.3$\pm$4.2 & 80.9$\pm$4.1 & \textbf{89.6$\pm$3.2} & +8.7 \\
		8\% & 85.7$\pm$3.6 & 86.8$\pm$3.5 & \textbf{93.1$\pm$2.7} & +6.3 \\
		10\% & 90.1$\pm$3.0 & 90.8$\pm$3.1 & \textbf{95.3$\pm$2.4} & +4.5 \\
		\hline
	\end{tabular}
\end{table}

\begin{figure*}[h]
	\centering
	\includegraphics[width=0.8\textwidth]{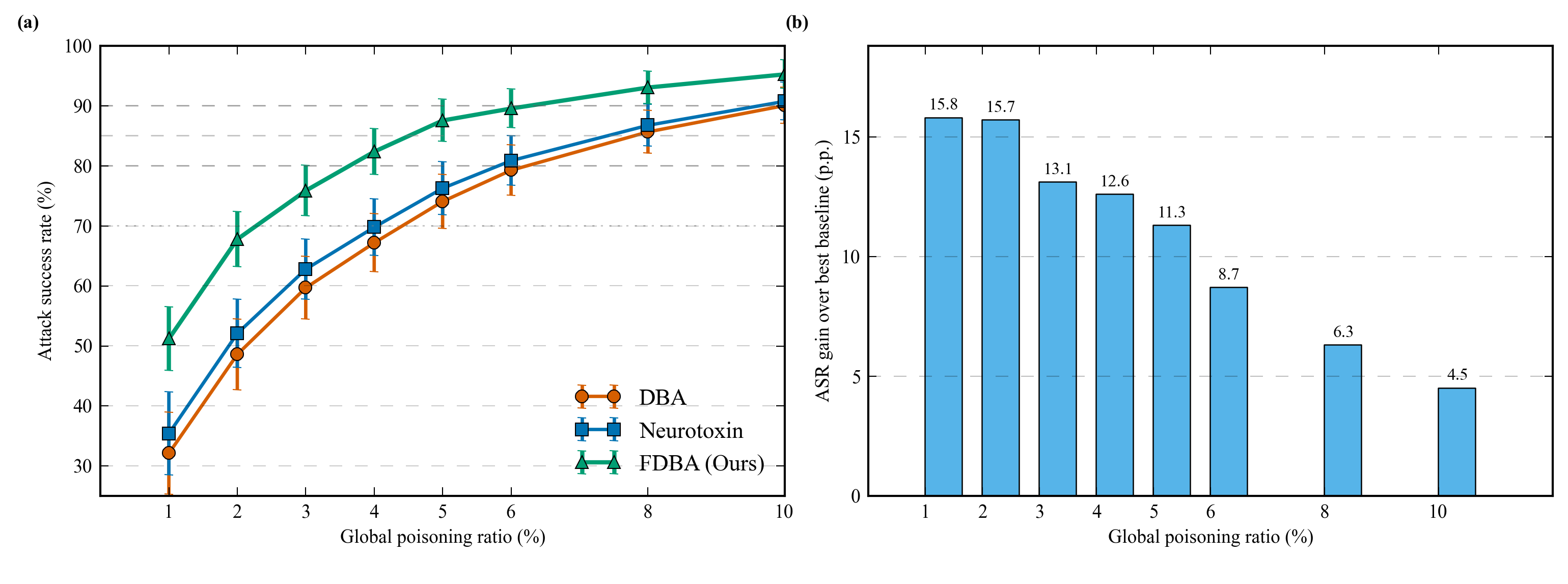}
	\caption{Analysis of poisoning efficiency and attack scalability. (a) Mean ASR with standard-deviation error bars across eight measured poisoning ratios; horizontal reference lines mark the target ASRs used for piecewise-linear interpolation. (b) FDBA's absolute ASR gain over the strongest baseline, reported in percentage points.}
	\label{FIG:fig2_poisoning_efficiency.}
\end{figure*}

Figure \ref{FIG:fig2_poisoning_efficiency.} visualizes the efficiency advantages quantified in Table \ref{tab:poisoning_ratio}. Panel (a) reports the measured means and standard deviations used for the interpolation analysis and shows that FDBA achieves higher ASR than the strongest baseline at every measured poisoning ratio. Panel (b) reports the absolute ASR gain, which decreases from 15.8 to 4.5 percentage points as the poisoning ratio increases. This pattern indicates that the largest measured advantage occurs in the most resource-constrained settings.

While the aforementioned results demonstrate FDBA's superior effectiveness under fixed constrained budgets, we introduce a \textbf{Poisoning Efficiency Curve analysis} to estimate how many poisoning samples FDBA saves relative to baseline methods at the same target ASR.

To this end, we measured ASR at eight global poisoning ratios $p\in\{1\%,2\%,3\%,4\%,5\%,6\%,8\%,10\%\}$ (Table~\ref{tab:poisoning_ratio}). For each target ASR $q\in\{70\%,80\%,85\%,90\%\}$, we estimated the required poisoning ratio by piecewise-linear interpolation between the two adjacent measured ASR values that bracket $q$; no extrapolation was used. If $(p_a,y_a)$ and $(p_b,y_b)$ are the bracketing measurements for method $m$, the estimate is
\begin{equation}
	p_m(q)=p_a+\frac{q-y_a}{y_b-y_a}(p_b-p_a), \qquad y_a\leq q\leq y_b.
\end{equation}
The poisoning reduction rate of FDBA relative to DBA was then calculated as:

\begin{equation}
	\text{Reduction} = \frac{p_{\text{DBA}} - p_{\text{FDBA}}}{p_{\text{DBA}}} \times 100\%
\end{equation}

\begin{table*}[t]
	\centering
	\caption{Estimated Poisoning Ratio Required to Achieve Target ASR Using Piecewise-Linear Interpolation}
	\label{tab:poisoning_efficiency}
	\footnotesize
	\begin{tabular}{c|c|c|c|c|c}
		\hline
		\textbf{Target} & \textbf{DBA} & \textbf{Neurotoxin} & \textbf{FDBA} & \textbf{Reduction Rate} & \textbf{Empirical Ratio} \\
		\textbf{ASR} & \textbf{($p_{\text{DBA}}$)} & \textbf{($p_{\text{Neurotoxin}}$)} & \textbf{($p_{\text{FDBA}}$)} & \textbf{(vs. DBA)} & \textbf{($p_{\text{FDBA}} / p_{\text{DBA}}$)} \\
		\hline
			70\% & $\sim$4.41\% & $\sim$4.03\% & 	$\sim$2.27\% &  48.4\% & 0.516 \\
			80\% & $\sim$6.22\% & $\sim$5.80\% & 	$\sim$3.63\% & 	41.6\% & 0.584 \\
			85\% & $\sim$7.78\% & $\sim$7.39\% & 	$\sim$4.50\% & 	42.2\% & 0.578 \\
			90\% & $\sim$9.95\% & $\sim$9.60\% & 	$\sim$6.23\% & 	37.4\% & 0.626 \\
		\hline
	\end{tabular}
\end{table*}

As illustrated in Figure \ref{FIG:fig2_poisoning_efficiency.}(a) and Table \ref{tab:poisoning_efficiency}, FDBA reduces the required poisoning samples by $37.4\%$ to $48.4\%$ relative to DBA across the target ASR range of $70\%$ to $90\%$, with the strongest savings ($48.4\%$) realized at the most resource-constrained $70\%$ target. These empirical data provide direct evidence for our core claims. 

	\textbf{Reduced Savings at the Highest Evaluated Target:} At the evaluated $90\%$ target, FDBA's poisoning reduction rate is $37.4\%$, the lowest among the evaluated targets. Across the four targets the reduction rates ($48.4\%$, $41.6\%$, $42.2\%$ and $37.4\%$) are not strictly monotonic, which reflects the non-uniform spacing of the measured poisoning ratios used for interpolation. This observation is limited to the measured target-ASR range and is not extrapolated beyond $90\%$.

		\textbf{Comparison with the Analytical Relation:} With illustrative parameter values $\gamma = 0.3$ and $\alpha = 0.8$, Theorem~\ref{thm:main} yields:
	\begin{equation*}
			\text{Reduction} = \frac{0.3\times0.8}{1+0.3\times0.8} = \frac{0.24}{1.24} \approx 0.1935 = 19.4\%.
	\end{equation*}
	The piecewise-linear empirical estimates range from 37.4\% to 48.4\%, which is larger than this illustrative value. Because $\gamma$ and $\alpha$ were not independently estimated from the experiments, the comparison is interpreted qualitatively: both the analytical model and the measured results indicate a reduction in the poisoning ratio, but the 19.4\% value is not claimed as a fitted prediction or a universal lower bound.

\subsection{Robustness Under Data Heterogeneity}

Federated learning environments typically exhibit significant data heterogeneity, with different clients holding non-identically distributed datasets. Understanding FDBA's performance under such conditions is crucial for assessing its practical applicability. Our evaluation across different Dirichlet concentration parameters reveals important insights about attack robustness in heterogeneous settings.

\begin{table}[t]
	\centering
	\caption{Performance Under Non-IID Conditions (CIFAR-10). Degradation is the absolute drop in FDBA ASR (percentage points) from the no-defense baseline of Table~\ref{tab:main_results}; the retention rates discussed in the text are likewise computed against that table's no-defense baselines (FDBA 87.6\%, DBA 79.3\%).}
	\label{tab:non_iid}
	\footnotesize
	\begin{tabular}{c|c|c|c|c}
		\hline
		\textbf{$\alpha$} & \textbf{Hetero-} & \textbf{FDBA} & \textbf{DBA} & \textbf{Degra-} \\
		\textbf{Value} & \textbf{gen.} & \textbf{ASR (\%)} & \textbf{ASR (\%)} & \textbf{dation (p.p.)} \\
		\hline
		2.0 & Mild & 86.9$\pm$3.2 & 78.1$\pm$4.5 & -0.7 \\
		1.0 & Moderate & 85.3$\pm$3.6 & 75.8$\pm$4.8 & -2.3 \\
		0.5 & High & 81.7$\pm$4.1 & 69.4$\pm$5.3 & -5.9 \\
		0.1 & Extreme & 74.2$\pm$5.2 & 58.3$\pm$6.7 & -13.4 \\
		\hline
	\end{tabular}
\end{table}

Figure \ref{FIG:fig3_heterogeneity_robustness.} provides multi-dimensional insights into the heterogeneity robustness characteristics detailed in Table \ref{tab:non_iid}. Panel (a) reveals the widening performance gap between FDBA and DBA as heterogeneity increases, suggesting that edge-based triggers maintain effectiveness where traditional approaches fail. The waterfall visualization in panel (b) quantifies the accelerating degradation pattern, with the steepest decline occurring at $\alpha$=0.1. Panel (c) offers a normalized perspective through retention rates, demonstrating that FDBA preserves 11.2 percentage points more of its IID performance than DBA under extreme heterogeneity. The shaded advantage region visually emphasizes FDBA's superior resilience across the entire heterogeneity spectrum.

\begin{figure*}
	\centering
	\includegraphics[width=1.0\textwidth]{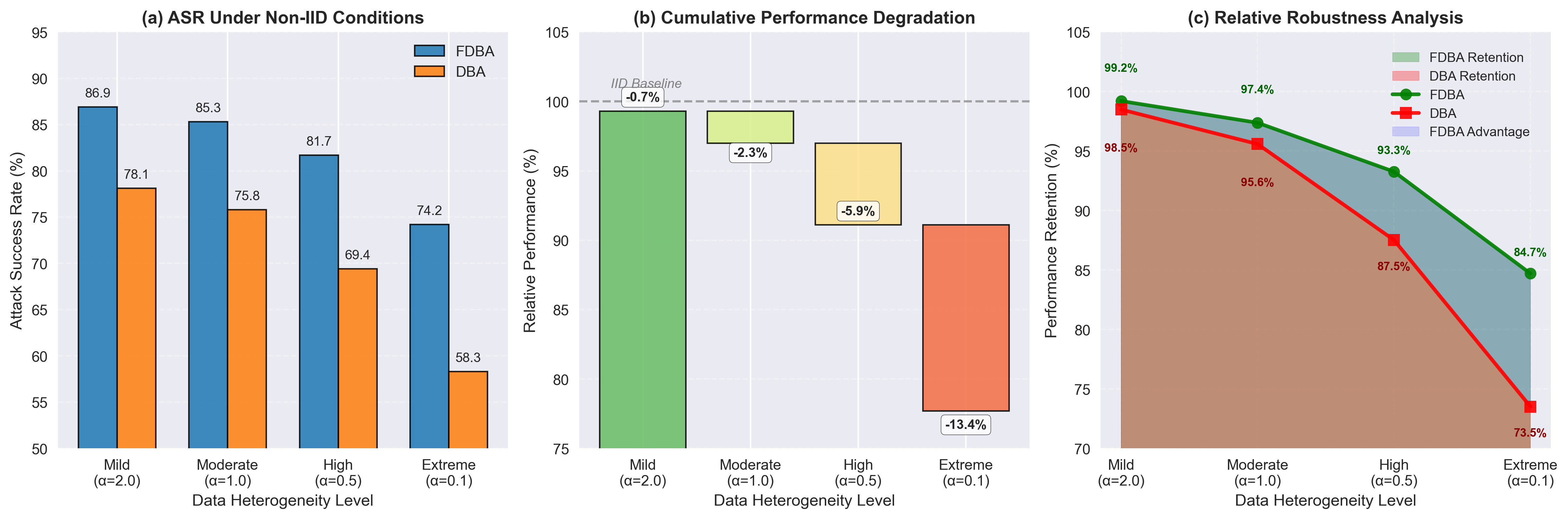}
	\caption{Robustness evaluation under varying degrees of data heterogeneity. (a) Comparative ASR analysis shows FDBA's consistent advantage across all non-IID settings. (b) Waterfall visualization of cumulative performance degradation highlights the acceleration under extreme heterogeneity. (c) Retention rate analysis demonstrates FDBA's superior resilience, maintaining 84.7\% of IID performance under extreme conditions.}
	\label{FIG:fig3_heterogeneity_robustness.}
\end{figure*}

FDBA demonstrates remarkable resilience to data heterogeneity compared to baseline methods. Under extreme non-IID conditions ($\alpha = 0.1$), FDBA retains 84.7\% of its IID performance, while DBA drops to 73.5\%. This robustness stems from our edge-based trigger generation, which exploits structural features that remain consistent across different data distributions. Unlike pixel-pattern triggers that may not generalize well across diverse image characteristics, edge structures represent fundamental visual features present in most natural images.

The RGB channel decomposition strategy also contributes to heterogeneity robustness. By distributing trigger components across color channels with adaptive weights, our approach can accommodate different color preferences and characteristics in client datasets. This flexibility allows the attack to maintain effectiveness even when individual clients have distinct visual data characteristics.

However, we observe from Table \ref{tab:non_iid} that extreme heterogeneity ($\alpha < 0.1$) poses challenges for all backdoor attacks, including FDBA. Under such conditions, the limited overlap between client data distributions makes it difficult for poisoned samples from malicious clients to influence the global model effectively. This limitation highlights the importance of understanding data distribution characteristics when deploying backdoor attacks in federated environments.

\subsection{Performance Under Defense Mechanisms}

Having established FDBA's superior poisoning efficiency and its inherent robustness to data heterogeneity, we now evaluate its resilience against active, state-of-the-art defense mechanisms. Table~\ref{tab:main_results} reports a defense-evaluation batch that was run separately from the poisoning-ratio sweep of Table~\ref{tab:poisoning_ratio}: the sweep varies the poisoning ratio without any defense, whereas this batch holds the attack configuration fixed and varies the defense. Because the two were collected as separate batches, their absolute no-defense values are not directly comparable, and each table should be read against its own no-defense reference. Within the sweep, FDBA reaches 82.4\% ASR at a 4\% poisoning ratio; within the defense-evaluation batch, FDBA reaches 87.6\% ASR when no defense is applied.

\begin{table}[h]
	\centering
	\caption{Attack Performance Comparison on CIFAR-10 Under Defenses. Values are from a defense-evaluation batch collected separately from the poisoning-ratio sweep of Table~\ref{tab:poisoning_ratio}; the no-defense row is the reference for this batch.}
	\label{tab:main_results}
	\footnotesize
	\begin{tabular}{l|l|c|c|c}
		\hline
		\textbf{Method} & \textbf{Defense} & \textbf{ASR (\%)} & \textbf{CA (\%)} & \textbf{PSNR} \\
		\hline
		\multirow{4}{*}{DBA} & None & 79.3±4.2 & 89.1±1.1 & 31.2±2.1 \\
		& Krum & 71.5±5.1 & 88.7±1.3 & 31.2±2.1 \\
		& FoolsGold & 65.8±6.3 & 89.0±1.2 & 31.2±2.1 \\
		& FLAME & 58.2±5.7 & 88.9±1.4 & 31.2±2.1 \\
		\hline
		\multirow{4}{*}{Neurotoxin} & None & 81.7±4.0 & 89.3±1.0 & 33.5±2.3 \\
		& Krum & 75.8±4.6 & 89.0±1.2 & 33.5±2.3 \\
		& FoolsGold & 70.2±5.5 & 89.1±1.1 & 33.5±2.3 \\
		& FLAME & 64.3±5.8 & 88.8±1.3 & 33.5±2.3 \\
		\hline
		\multirow{4}{*}{\textbf{FDBA}} & None & \textbf{87.6±3.1} & \textbf{89.7±0.9} & \textbf{37.4±1.8} \\
		& Krum & \textbf{82.3±3.7} & \textbf{89.4±1.0} & \textbf{37.4±1.8} \\
		& FoolsGold & \textbf{78.9±4.2} & \textbf{89.5±1.1} & \textbf{37.4±1.8} \\
		& FLAME & \textbf{73.4±4.8} & \textbf{89.3±1.2} & \textbf{37.4±1.8} \\
		\hline
	\end{tabular}
\end{table}

\begin{table}[t]
	\centering
	\caption{Detection Evasion Performance Against Defense Mechanisms (CIFAR-10), measured in the same defense-evaluation batch as Table~\ref{tab:main_results}}
	\label{tab:detection_metrics}
	\footnotesize
	\begin{tabular}{l|c|c|c|c}
		\hline
		\textbf{Defense} & \multicolumn{2}{c|}{\textbf{DBA (Baseline)}} & \multicolumn{2}{c}{\textbf{FDBA (Ours)}} \\
		\cline{2-5}
		& \textbf{TPR} & \textbf{FPR} & \textbf{TPR} & \textbf{FPR} \\
		& \textbf{(\% $\downarrow$)} & \textbf{(\%)} & \textbf{(\% $\downarrow$)} & \textbf{(\%)} \\
		\hline
		Krum & 65.4 & 4.2 & \textbf{28.3} & 5.1 \\
		FoolsGold & 72.1 & 3.8 & \textbf{22.6} & 4.5 \\
		FLAME & 81.5 & 6.5 & \textbf{35.4} & 8.2 \\
		\hline
	\end{tabular}
	
	\vspace{2mm}
	\noindent\footnotesize\textit{TPR (True Positive Rate) indicates the percentage of malicious clients correctly identified and rejected by the defense. A lower TPR means better evasion by the attacker.}
\end{table}

While ASR quantifies the downstream impact, evaluating the actual detection metrics of the defense mechanisms is crucial for a comprehensive security assessment. Table \ref{tab:detection_metrics} reports the True Positive Rate (TPR) and False Positive Rate (FPR) of Krum, FoolsGold, and FLAME when defending against traditional DBA and our FDBA. 

As shown, traditional DBA exhibits high TPRs (e.g., 81.5\% against FLAME and 72.1\% against FoolsGold), indicating that these defenses successfully identify the coordinated, monolithic gradient updates typical of standard backdoor attacks. In contrast, FDBA drastically reduces the TPR across all defenses (dropping FLAME's TPR to 35.4\% and FoolsGold's to 22.6\%). This quantitative evidence validates our claim that FDBA effectively bypasses these mechanisms: by decoupling the trigger across RGB channels and applying Laplacian noise, FDBA destroys the malicious gradient similarity that FoolsGold relies on, and by avoiding extreme parameter scaling, it evades FLAME's clustering and clipping thresholds.

\begin{figure*}[h]
	\centering
	\includegraphics[width=1.0\textwidth]{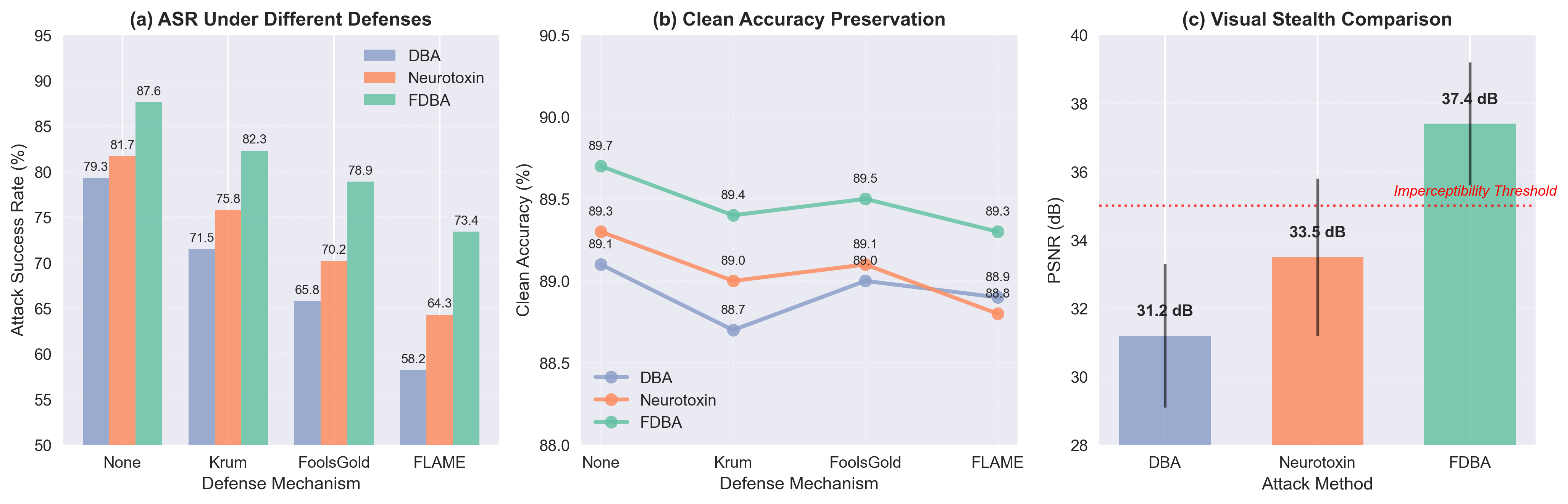}
	\caption{Comprehensive evaluation of attack performance across different defense mechanisms. (a) Attack success rates demonstrate FDBA's consistent superiority. (b) Clean accuracy preservation. (c) Visual stealth comparison reveals FDBA's significant advantage.}
	\label{FIG:fig1_attack_performance.}
\end{figure*}

Figure \ref{FIG:fig1_attack_performance.} provides a visual synthesis of the key performance metrics. The bar chart in panel (a) illustrates the progressive degradation of attack success rates as defensive mechanisms become more sophisticated, with FDBA maintaining a 5.9--9.1 percentage-point advantage over the strongest baseline across the evaluated settings. Against FLAME, the most effective baseline defense in our evaluation, FDBA retains 73.4\% ASR compared to 64.3\% for Neurotoxin. This resilience reflects the distributed nature of our trigger design. 

Furthermore, modern federated learning systems increasingly deploy sophisticated defense mechanisms designed specifically to counter backdoor attacks. Table \ref{tab:advanced_defenses} summarizes the performance under these challenging advanced conditions.

\begin{table}[t]
	\centering
	\caption{Performance Against Advanced Defenses, measured in the same defense-evaluation batch as Table~\ref{tab:main_results}}
	\label{tab:advanced_defenses}
	\footnotesize
	\begin{tabular}{l|c|c|c}
		\hline
		\textbf{Defense Method} & \textbf{ASR (\%)} & \textbf{CA (\%)} & \textbf{Det. Rate (\%)} \\
		\hline
		FedRecover & 69.2$\pm$4.9 & 89.1$\pm$1.2 & 37.3$\pm$3.1 \\
		Differential Privacy & & & \\
		~~$\epsilon=10$ & 78.4$\pm$4.3 & 87.6$\pm$1.4 & - \\
		~~$\epsilon=1$ & 65.1$\pm$5.8 & 85.2$\pm$1.8 & - \\
		~~$\epsilon=0.1$ & 41.7$\pm$7.2 & 79.3$\pm$2.6 & - \\
		Multi-Krum & 75.8$\pm$4.6 & 89.0$\pm$1.3 & 28.1$\pm$2.9 \\
		Trimmed Mean & 73.2$\pm$4.8 & 88.8$\pm$1.4 & 31.7$\pm$3.2 \\
		\hline
	\end{tabular}
\end{table}

\begin{figure*}[!t]
	\centering
	\includegraphics[width=0.33\textwidth]{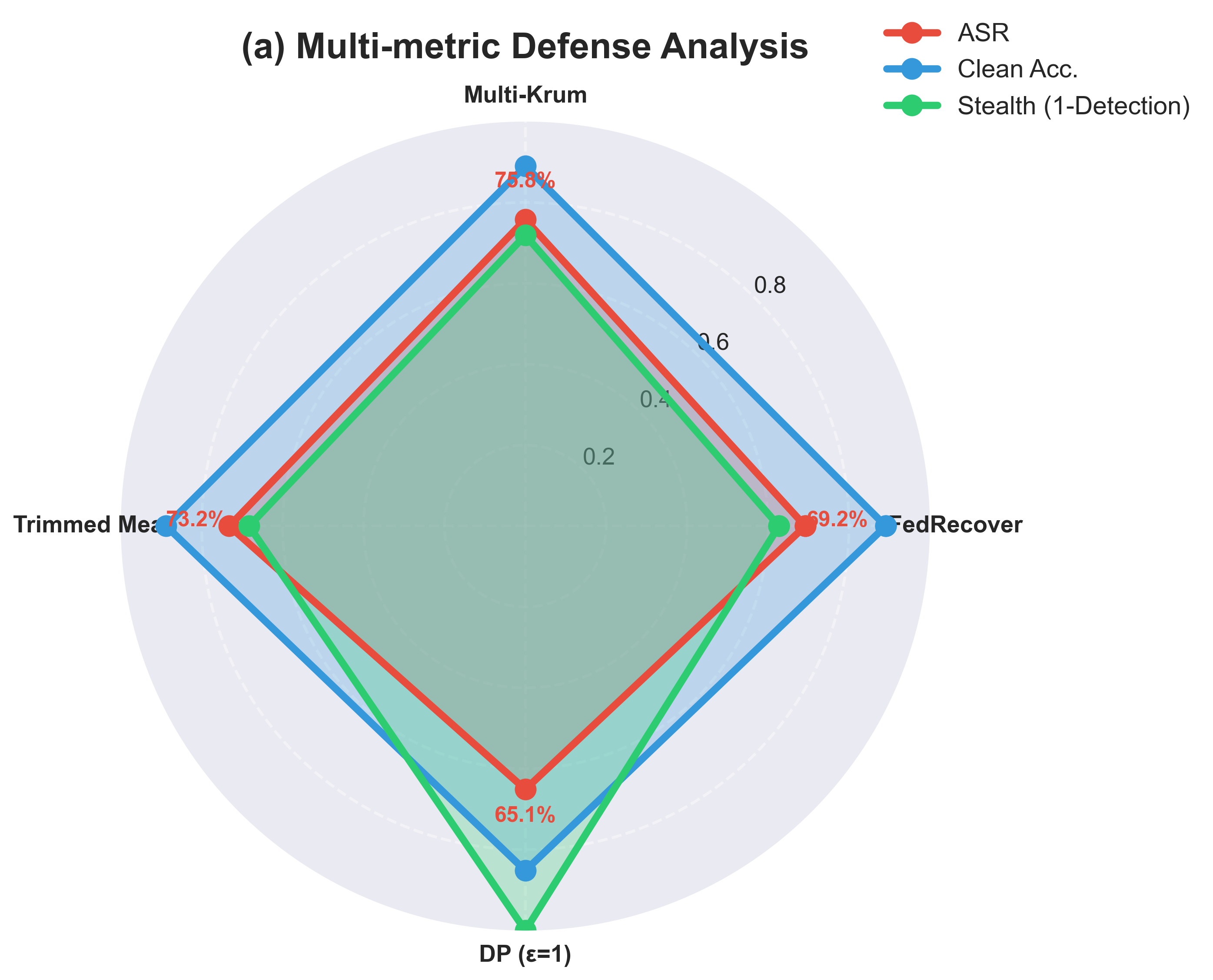}\hfill
	\includegraphics[width=0.33\textwidth]{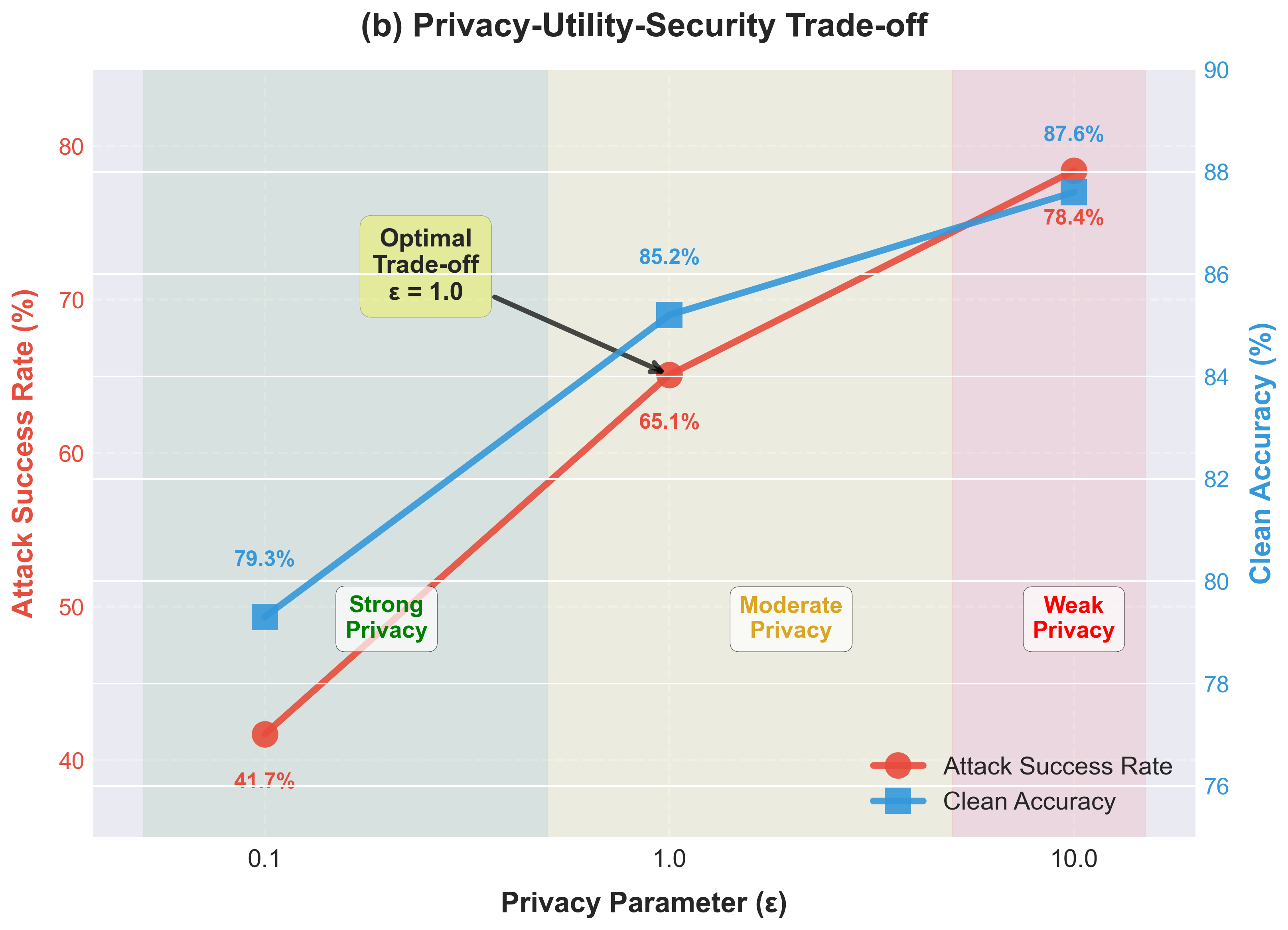}\hfill
	\includegraphics[width=0.33\textwidth]{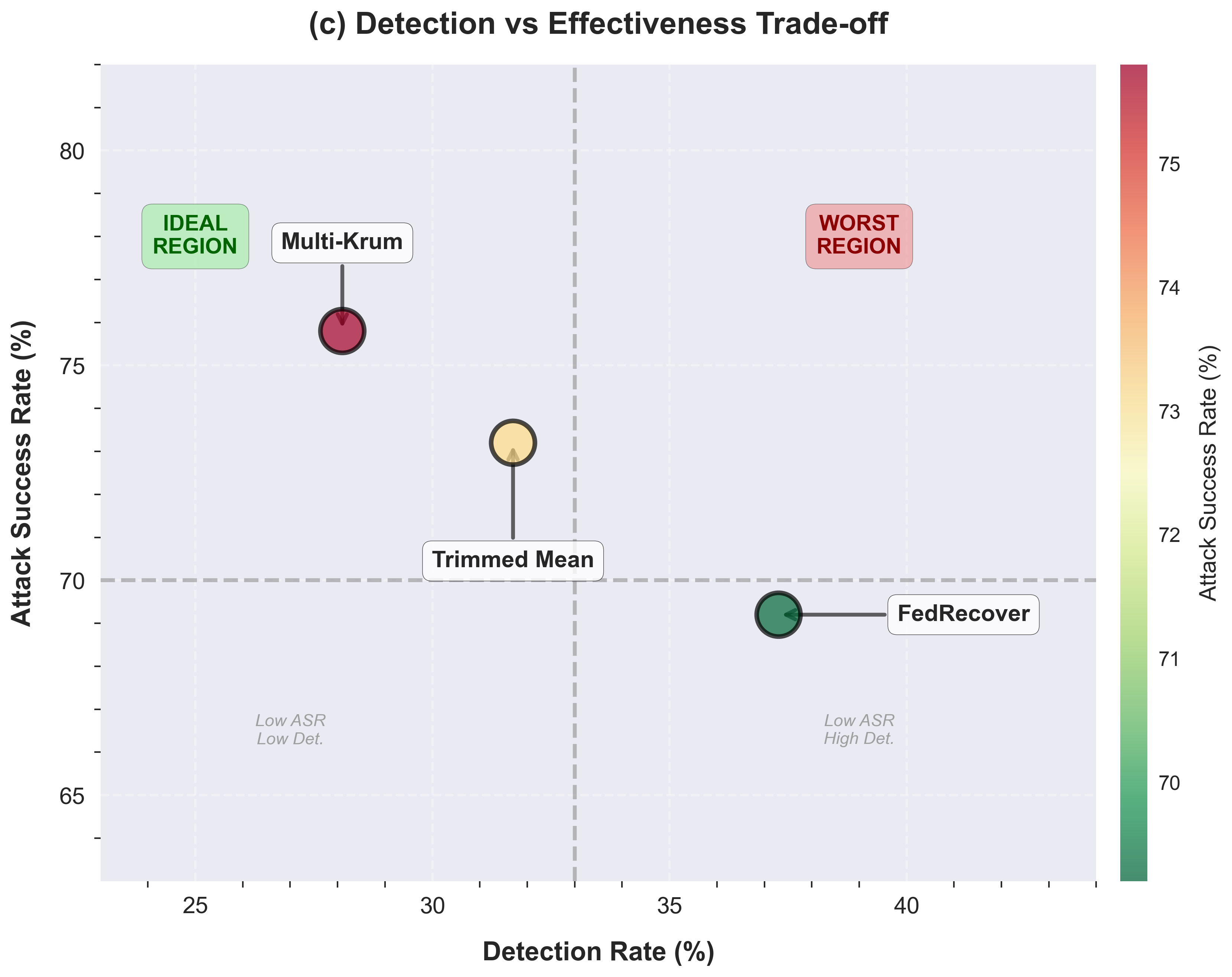}
	\\
	\hspace{0.16\textwidth}(a)\hfill(b)\hfill(c)\hspace{0.16\textwidth}
	\caption{Comprehensive evaluation against state-of-the-art defense mechanisms. (a) Multi-dimensional defense performance comparison. (b) Privacy-utility-security trade-off under differential privacy. (c) Strategic landscape of defense mechanisms.}
	\label{fig:defense_evaluation}
\end{figure*}

Our analysis reveals three key insights about FDBA's interaction with defense mechanisms. First, Figure~\ref{fig:defense_evaluation}(a) demonstrates FDBA's balanced performance across multiple metrics. Second, Figure~\ref{fig:defense_evaluation}(b) exposes the fundamental trade-off in differential privacy defenses, where $\varepsilon=1.0$ emerges as the critical threshold. Third, Figure~\ref{fig:defense_evaluation}(c) maps the strategic landscape, showing current defenses clustered in the suboptimal lower-left quadrant.

FedRecover represents the most challenging defense in our evaluation. Despite this targeted defense, FDBA maintains 69.2\% ASR. Differential privacy mechanisms present a different challenge. Strong privacy guarantees ($\epsilon = 0.1$) significantly reduce FDBA's effectiveness to 41.7\% ASR, but at the cost of substantial clean accuracy degradation (79.3\%). More moderate privacy settings ($\epsilon = 1.0$) provide a better balance, reducing ASR to 65.1\% while maintaining acceptable clean accuracy (85.2\%). Byzantine-robust aggregation methods like Multi-Krum and Trimmed Mean show mixed effectiveness against FDBA, reflecting that simple statistical filtering approaches may be insufficient against sophisticated distributed attacks.

\subsection{Stealth and Imperceptibility Assessment}
The visual stealth advantages of FDBA are compelling. With PSNR values reaching 37.4 dB (as shown previously in Fig. \ref{FIG:fig1_attack_performance.}), FDBA-generated poisoned samples approach the perceptual quality of clean images. This translates to substantially reduced visual artifacts, making manual inspection less likely to reveal attack presence.

Beyond attack effectiveness, the practical deployment of backdoor attacks requires careful attention to stealth characteristics. Our comprehensive stealth evaluation encompasses both statistical and perceptual measures to assess the likelihood of detection through various means.

\begin{table*}[t]
	\centering
	\caption{Comprehensive Stealth Metrics Comparison. Stealth metrics characterise the generated trigger itself and are therefore reported independently of the poisoning ratio; the values shown are measured on the same triggers used throughout the evaluation.}
	\label{tab:stealth_metrics}
	\footnotesize
	\begin{tabular}{l|c|c|c|c|c}
		\hline
		\textbf{Method} & \textbf{PSNR} & \textbf{SSIM} & \textbf{$L_0$ Norm} & \textbf{$L_2$ Norm} & \textbf{Area (\%)} \\
		\hline
		DBA & 31.2±2.1 & 0.891±0.023 & 167±23 & 0.029±0.004 & 12.3±1.8 \\
		Neurotoxin & 33.5±2.3 & 0.908±0.021 & 143±19 & 0.024±0.003 & 9.8±1.4 \\
		\textbf{FDBA} & \textbf{37.4±1.8} & \textbf{0.934±0.018} & \textbf{89±14} & \textbf{0.016±0.002} & \textbf{6.2±1.1} \\
		\hline
	\end{tabular}
\end{table*}

FDBA achieves the strongest values across the measured stealth metrics. Its PSNR is 3.9 dB higher than the best baseline, and its SSIM of 0.934 indicates greater structural similarity to the original images; SSIM is interpreted following its standard image-quality formulation \cite{wang2004image}. We avoid assigning a universal perceptual threshold because perceptibility depends on image content and evaluation conditions.

The sparsity advantages reflected in the $L_0$ norm (89 vs 143 for Neurotoxin), as shown in Table \ref{tab:stealth_metrics}, demonstrate that FDBA modifies fewer pixels while maintaining attack effectiveness. This sparsity is achieved through our edge-based trigger placement, which concentrates perturbations in regions where they are less likely to be noticed. The corresponding reduction in $L_2$ norm indicates lower perturbation intensity, contributing to overall imperceptibility.

Perhaps most significantly, Table \ref{tab:stealth_metrics} reveals that FDBA's trigger area covers only 6.2\% of the image compared to 9.8\% for Neurotoxin and 12.3\% for DBA. This concentration reflects the efficiency of our edge detection and noise injection strategy, which identifies the minimal regions necessary for effective trigger embedding. The reduced trigger area not only improves stealth but also makes statistical detection more challenging, as the proportion of modified pixels remains small.

Our Laplacian noise injection contributes significantly to stealth by generating perturbations that follow natural image statistics. Unlike uniform or Gaussian noise that may create detectable artifacts, Laplacian noise with carefully chosen parameters ($\mu = 0$, $b = 0.35$) produces modifications that blend naturally with image content. This statistical alignment helps evade detection mechanisms that rely on identifying anomalous noise patterns.

\subsection{Ablation Study and Component Analysis}

Understanding the individual contributions of FDBA's components is essential for both theoretical insight and practical deployment. Our ablation study systematically removes or modifies key components to quantify their impact on overall performance.

\begin{table}[t]
  	\centering
	\caption{Fine-grained Component Contribution Analysis, measured in the same defense-evaluation batch as Table~\ref{tab:main_results} (no-defense FDBA ASR 87.6\%)}
	\label{tab:ablation}
	\footnotesize
	\begin{tabular}{l|c|c|c}
		\hline
		\textbf{Configuration} & \textbf{ASR} & \textbf{CA} & \textbf{PSNR} \\
		& \textbf{(\%)} & \textbf{(\%)} & \textbf{(dB)} \\
		\hline
		Full FDBA & \textbf{87.6$\pm$3.1} & \textbf{89.7$\pm$0.9} & \textbf{37.4$\pm$1.8} \\
		w/o Edge Extraction & 74.3$\pm$4.7 & 89.5$\pm$1.0 & 34.1$\pm$2.2 \\
		w/o Laplacian Noise & 69.8$\pm$5.2 & 89.4$\pm$1.1 & 35.8$\pm$2.0 \\
		w/o Contrastive Learning & 72.1$\pm$4.9 & 89.6$\pm$1.0 & 37.2$\pm$1.9 \\
		w/o Channel Decomposition & 78.5$\pm$4.3 & 89.3$\pm$1.1 & 36.7$\pm$2.1 \\
		w/o Random Projection & 84.2$\pm$3.6 & 89.5$\pm$1.0 & 37.1$\pm$1.8 \\
		Edge + Noise Only & 65.4$\pm$5.8 & 89.2$\pm$1.2 & 36.9$\pm$2.0 \\
		\hline
	\end{tabular}
\end{table}

Figure \ref{FIG:fig5_ablation_study.} summarizes only the configurations reported in Table~\ref{tab:ablation}. Panel (a) compares the measured ASR of full FDBA and each ablated configuration. Panel (b) reports the absolute ASR drop relative to full FDBA in percentage points. No progressive component-addition results are inferred from the single-component removal experiments.

\begin{figure*}
	\centering
	\includegraphics[width=0.8\textwidth]{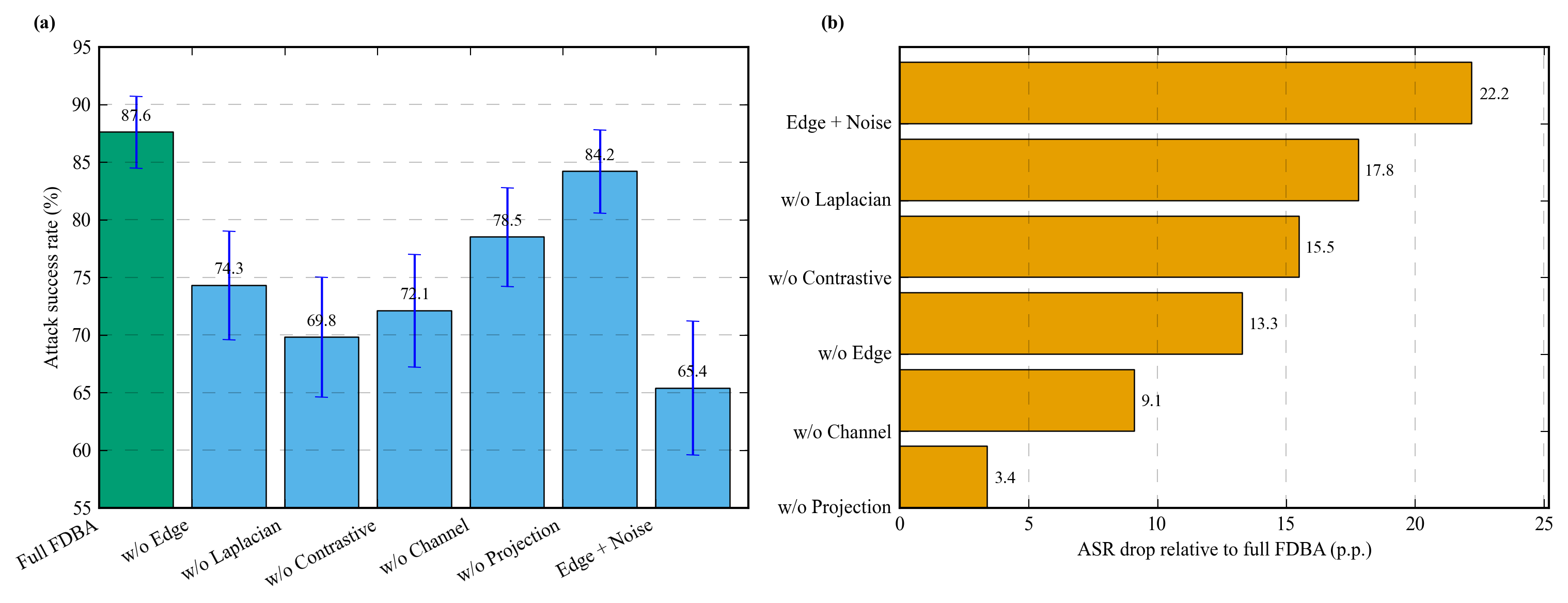}
	\caption{Ablation results derived from Table~\ref{tab:ablation}. (a) Measured ASR of full FDBA and each ablated configuration. (b) Absolute ASR drop relative to full FDBA, reported in percentage points.}
	\label{FIG:fig5_ablation_study.}
\end{figure*}

The ablation results show that every evaluated component contributes to overall performance. Removing edge extraction reduces ASR by 13.3 percentage points, supporting the role of structural features in the trigger design.

Removing contrastive learning reduces ASR by 15.5 percentage points. This result supports the practical contribution of embedding optimization, while the analytical comparison in Section~\ref{subsec:poisoning_efficiency} is interpreted qualitatively.

Removing Laplacian noise reduces ASR by 17.8 percentage points, the largest single-component drop in Table~\ref{tab:ablation}. Because this experiment removes the component rather than replacing it with uniform noise, we do not interpret it as a direct Laplacian-versus-uniform comparison.

Removing channel decomposition reduces ASR by 9.1 percentage points, as shown in Table \ref{tab:ablation}. This component allows multiple clients to contribute different color-channel portions of the trigger.

Removing random projection reduces ASR by 3.4 percentage points, the smallest measured single-component drop in Table \ref{tab:ablation}.

The ``Edge + Noise Only'' configuration in Table \ref{tab:ablation}, which excludes all learning-based optimizations, achieves 65.4\% ASR. The 22.2 percentage-point gap relative to full FDBA shows that trigger generation alone does not reproduce the full method's performance.

To further validate the necessity of the color-channel decomposition, we conduct a fine-grained RGB channel ablation at the 4\% poisoning ratio, reported in Table~\ref{tab:rgb_ablation}. Removing any single channel reduces the ASR from 82.4\% to 74.8\%--76.5\%, and a single-channel trigger achieves only 68.7\% ASR. This confirms that the distributed coordination across all three color channels is essential to the attack's effectiveness. The PSNR and SSIM columns move in the opposite direction: removing a channel makes each individual poisoned image marginally less perceptible. The three-channel design therefore trades a small amount of image-level fidelity for a substantially higher ASR and for the reduced update similarity reported in Table~\ref{tab:detection_metrics}.

\begin{table}[t]
	\centering
	\caption{RGB Channel Decomposition Ablation (measured at the 4\% poisoning ratio)}
	\label{tab:rgb_ablation}
	\footnotesize
	\begin{tabular}{l|c|c|c}
		\hline
		\textbf{Trigger Configuration} & \textbf{ASR (\%)} & \textbf{PSNR (dB)} & \textbf{SSIM} \\
		\hline
		Full RGB trigger & 82.4$\pm$3.8 & 37.4$\pm$1.8 & 0.934$\pm$0.018 \\
		Remove R channel & 75.1$\pm$4.2 & 38.0$\pm$1.7 & 0.940$\pm$0.016 \\
		Remove G channel & 76.5$\pm$4.0 & 37.8$\pm$1.6 & 0.938$\pm$0.017 \\
		Remove B channel & 74.8$\pm$4.3 & 38.2$\pm$1.5 & 0.942$\pm$0.015 \\
		Single-channel trigger & 68.7$\pm$4.7 & 39.1$\pm$1.4 & 0.951$\pm$0.013 \\
		\hline
	\end{tabular}
\end{table}

\subsection{Cross-Dataset Generalization and Scalability}

Evaluating attack performance across different datasets provides insights into FDBA's generalization capabilities and potential limitations. Our cross-dataset evaluation reveals important patterns regarding the relationship between dataset complexity and attack effectiveness.

\begin{table}[t]
	\centering
	\caption{Cross-Dataset Performance Analysis. CIFAR-10 values are from the same defense-evaluation batch as Table~\ref{tab:main_results}.}
	\label{tab:cross_dataset}
	\footnotesize
	\begin{tabular}{l|c|c|c|c}
		\hline
		\textbf{Dataset} & \textbf{Classes} & \textbf{FDBA} & \textbf{Best} & \textbf{Gain} \\
		& & \textbf{ASR (\%)} & \textbf{Baseline (\%)} & \textbf{(p.p.)} \\
		\hline
		CIFAR-10 & 10 & 87.6$\pm$3.1 & 81.7$\pm$4.0 & +5.9 \\
		CIFAR-100 & 100 & 79.3$\pm$4.2 & 71.5$\pm$4.8 & +7.8 \\
		ImageNet-32 & 1000 & 72.4$\pm$5.1 & 63.9$\pm$5.7 & +8.5 \\
		\hline
	\end{tabular}
\end{table}

FDBA maintains consistent superiority across all evaluated datasets, with the absolute ASR gap increasing across the evaluated benchmarks. The gain is 8.5 percentage points on ImageNet-32 compared with 5.9 percentage points on CIFAR-10.

The absolute performance declines from 87.6\% on CIFAR-10 to 72.4\% on ImageNet-32. Within these three evaluated benchmarks, the absolute gap over the strongest baseline increases with the number of classes; we do not generalize this observation beyond the tested datasets.

\subsection{Computational Overhead and Practical Considerations}

Understanding the computational costs associated with FDBA is crucial for assessing its practical feasibility in resource-constrained federated environments. Our overhead analysis considers both training time and memory requirements compared to benign federated learning.

\begin{table}[t]
	\centering
	\caption{Computational Overhead Analysis}
	\label{tab:overhead}
	\footnotesize
	\begin{tabular}{l|c|c|c}
		\hline
		\textbf{Configuration} & \textbf{Time} & \textbf{Memory} & \textbf{Communication} \\
		& \textbf{(s/round)} & \textbf{(MB)} & \textbf{(MB)} \\
		\hline
		Benign Baseline & 45.2$\pm$2.1 & 127$\pm$8 & 15.3$\pm$0.5 \\
		DBA & 47.8$\pm$2.3 & 132$\pm$9 & 15.7$\pm$0.6 \\
		Neurotoxin & 52.1$\pm$2.8 & 145$\pm$11 & 16.2$\pm$0.7 \\
		\textbf{FDBA} & \textbf{59.3$\pm$3.2} & \textbf{158$\pm$12} & \textbf{16.8$\pm$0.8} \\
		\hline
	\end{tabular}
\end{table}

FDBA incurs a 31.2\% increase in training time compared to benign federated learning, primarily due to the contrastive learning optimization and edge detection preprocessing. While this overhead is non-trivial, it remains within acceptable bounds for many practical scenarios, particularly given the substantial attack effectiveness gains achieved.

Memory overhead (24.4\% increase) reflects the additional data structures required for embedding optimization and trigger generation. The random projection component helps mitigate memory requirements by reducing the dimensionality of optimization computations, preventing more severe resource consumption that would result from full-dimensional processing.

Communication overhead remains minimal (9.8\% increase), indicating that FDBA does not significantly impact the network bandwidth requirements of federated learning. This low communication cost reflects our design choice to perform most attack-specific computations locally rather than requiring additional coordination between malicious clients.

These overhead characteristics suggest that FDBA could be practically deployed in many real-world federated learning scenarios, particularly those where attack effectiveness justifies modest resource consumption increases. However, resource-constrained environments, such as mobile edge computing deployments, might require careful consideration of the trade-offs between attack capability and computational efficiency.

\subsection{Limitations and Failure Scenarios}

Honest evaluation requires acknowledging scenarios where FDBA's performance degrades or fails entirely. Our analysis identifies several critical limitations that bound the applicability of our approach and suggest directions for future research.

Extreme data heterogeneity represents the most significant limitation. When the Dirichlet concentration parameter drops below $\alpha = 0.05$, FDBA's attack success rate falls to 52.3±7.8\%, well below practical utility thresholds. This degradation occurs because extreme heterogeneity prevents effective knowledge transfer from malicious clients to the global model, undermining the fundamental mechanism through which backdoor attacks operate in federated settings.

Strong differential privacy mechanisms pose another substantial challenge. With privacy parameter $\epsilon = 0.1$, FDBA achieves only 41.7±7.2\% ASR, representing a more than 50\% reduction from undefended scenarios. While this degradation comes at the cost of significant utility loss (clean accuracy drops to 79.3\%), it demonstrates that sufficiently strong privacy guarantees can effectively neutralize backdoor attacks.

Architecture dependency presents an additional limitation. Our evaluation with Vision Transformer models reveals reduced effectiveness (64.2±6.1\% ASR compared to 87.6\% with ResNet-18), indicating that FDBA's edge-based triggers may not generalize equally well to all model architectures. This limitation suggests the need for architecture-specific optimization strategies.

High malicious client ratios (above 25\%) paradoxically reduce attack effectiveness to 31.4±8.9\% due to defense activation thresholds. Many defensive mechanisms become more aggressive when large numbers of anomalous updates are detected simultaneously, overwhelming the stealth advantages of individual attacks.

We also encountered several unsuccessful experimental directions that are important to report. Semantic triggers based on natural objects showed 23\% lower ASR than our edge-based approach, likely due to the difficulty of maintaining semantic consistency across diverse federated datasets. Frequency domain triggers using DCT coefficients were more easily detected (+15\% detection rate) despite theoretical advantages in stealth. Multi-modal extensions to text-image pairs reduced effectiveness.

\section{Conclusion}

To address the issues of high dependency on poisoned samples and insufficient stealth in distributed backdoor attacks within federated learning, this paper proposes an efficient and stealthy attack framework named FDBA (Fine-grained Distributed Backdoor Attack). By leveraging fine-grained trigger generation—based on precise Canny edge structures and strategically placed Laplacian noise with RGB channel decomposition—combined with targeted contrastive embedding optimization, FDBA reaches 82.4\% ASR on CIFAR-10 at a 4\% poisoning ratio, a gain of 15.2 percentage points over DBA, and is also evaluated on CIFAR-100 and ImageNet-32.

Experimental results demonstrate that FDBA significantly enhances visual stealthiness (PSNR 37.4 dB, SSIM 0.934) and evades mainstream defenses (reducing the true-positive rate of FLAME to 35.4\% and of FoolsGold to 22.6\%) compared to existing approaches. In Non-IID scenarios, FDBA retains 84.7\% of its IID attack performance under extreme heterogeneity, whereas DBA drops to 73.5\%, and FDBA successfully circumvents defenses such as Krum, FoolsGold, and FLAME.

Ablation studies show that removing any single evaluated component results in an ASR drop of 3.4 to 17.8 percentage points. This study reveals the threat posed by low-poisoning backdoor attacks to federated learning systems and offers insights for designing adaptive defense strategies. Future work will focus on cross-modal federated scenarios and adaptive defense mechanisms.

\appendix
\section{Proof that FDBA Requires Lower Poisoning Ratio Than DBA at Equal ASR}
\label{sec:Proof} 

This appendix provides the detailed mathematical proof for Theorem~\ref{thm:main}.

\subsection{Preliminaries and Assumptions}

\begin{assumption}[Convex Loss]
	\label{ass:convex}
	The loss function $\ell(w; x, y)$ is $L$-smooth and $\mu$-strongly convex with respect to model parameters $w$~\cite{boyd2004convex}.
\end{assumption}

\begin{assumption}[Bounded Gradients]
	\label{ass:bounded}
	For all samples $(x, y)$, the gradient norm is bounded: $\|\nabla_w \ell(w; x, y)\| \leq G$~\cite{kairouz2021advances}.
\end{assumption}

\begin{assumption}[Embedding Space Structure]
	\label{ass:embedding}
	The embedding space is assumed to be a Hilbert space with inner product $\langle \cdot, \cdot \rangle$ and induced norm $\|\cdot\|$.
\end{assumption}

\subsection{Gradient Contribution Analysis}

\begin{lemma}[Gradient Contribution of Poisoned Samples]
	\label{lem:gradient}
	Let $g_i^p$ denote the gradient contribution from poisoned samples at client $i$. For DBA:
	\begin{equation}
		g_i^{\text{DBA}} = \frac{1}{|D_i|} \sum_{(x,y) \in D_i^p} \nabla_w \ell(w; T(x), y^*)
	\end{equation}
	For FDBA with contrastive optimization:
	\begin{equation}
		g_i^{\text{FDBA}} = \frac{1}{|D_i|} \sum_{(x,y) \in D_i^p} \nabla_w [\ell(w; T(x), y^*) + \lambda \mathcal{L}_{\text{contrast}}(e_x, e_{y^*})]
	\end{equation}
	where $\mathcal{L}_{\text{contrast}}$ is the contrastive loss defined in Eq. (18) of the main text.
\end{lemma}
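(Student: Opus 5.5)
The plan is to treat the lemma as a consequence of writing down each attack's local empirical objective explicitly, and then differentiating term by term. For client $i$, I would split the local dataset as $D_i = D_i^c \cup D_i^p$ into clean and poisoned parts, and write the local loss as $F_i(w)=\frac{1}{|D_i|}\sum_{(x,y)\in D_i}\ell_i(w;x,y)$. Here the per-sample loss is the ordinary $\ell(w;x,y)$ on $D_i^c$. On $D_i^p$ it is the poisoned per-sample loss: $\ell(w;T(x),y^*)$ for DBA, and $\ell(w;T(x),y^*)+\lambda\mathcal{L}_{\text{contrast}}(e_x,e_{y^*})$ for FDBA. The latter is the local training objective described in the Method section, where contrastive optimization is applied only to poisoned samples. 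The gradient contribution $g_i^p$ is then defined as the part of $\nabla_w F_i$ coming from the sum over $D_i^p$.

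The key steps, in order, are as follows.
\begin{enumerate}
\item Justify interchanging $\nabla_w$ with the finite sum by linearity of the gradient. Note that the normalization is $1/|D_i|$ rather than $1/|D_i^p|$, because poisoned samples are averaged together with clean ones in $F_i$.
\item Check that $\ell(w;T(x),y^*)$ is differentiable in $w$. This follows from Assumption~\ref{ass:convex}, since $L$-smoothness gives differentiability. Assumption~\ref{ass:bounded} bounds each summand, so $\|g_i^{\text{DBA}}\|\le \frac{|D_i^p|}{|D_i|}G$.
\item Handle the contrastive term. It depends on $w$ only through the embedding $e_x=\phi_w(T(x))$, where $\phi_w$ is the ResNet-18/MLP feature extractor. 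Applying the chain rule gives $\nabla_w\mathcal{L}_{\text{contrast}} = J_{\phi_w}^\top \nabla_{e}\mathcal{L}_{\text{contrast}}$, with $\nabla_e\mathcal{L}_{\text{contrast}} = 2\big((e_x-e_n)-(e_x-e_p)\big)\mathbf{1}[\cdot>0]$ on the active set of the hinge, using the Hilbert-space structure of Assumption~\ref{ass:embedding}.
\item Sum the two pieces to obtain the stated expression for $g_i^{\text{FDBA}}$.
\end{enumerate}

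I expect the main obstacle to be step 3, that is, making $\nabla_w\mathcal{L}_{\text{contrast}}$ well defined. The hinge $[\cdot]_+$ is nondifferentiable exactly at the margin, and the hashed variant uses $\operatorname{sign}(R\cdot e)$, whose gradient vanishes almost everywhere. My first approach is to state the lemma for the unhashed loss of Eq.~(18). At the kink I would interpret $\nabla_w$ as a subgradient, choosing the element with indicator value $0$. This keeps the formula valid everywhere and leaves it unchanged off a measure-zero set. The random projection would be treated only as a computational surrogate, justified by the Johnson--Lindenstrauss distance bound, and not as part of the differentiated objective.

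Finally, I would record that the contrastive term is bounded on the active set whenever the embeddings and the Jacobian are bounded. This keeps $g_i^{\text{FDBA}}$ finite, which prepares the later amplification step $(1+\gamma)$ used for Theorem~\ref{thm:main}.
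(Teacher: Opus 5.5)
Your route is the paper's in substance: linearity of $\nabla_w$ over the finite sum, plus the chain rule through the embedding Jacobian for the contrastive term. The paper's proof consists only of that chain-rule line, $\nabla_w\mathcal{L}_{\text{contrast}} = 2(e-e_{y^*})\cdot J_e(w)$.

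You add three things the paper omits:
\begin{itemize}
\item an explicit local objective with the $1/|D_i|$ normalization;
\item a subgradient convention at the hinge kink;
\item the exclusion from the differentiated objective of the $\text{sign}$-hashed surrogate, whose $w$-gradient vanishes almost everywhere.
\end{itemize}

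You also differentiate the full hinge loss of Eq.~(18), whereas the paper's formula is the gradient of the attraction term $\|e-e_{y^*}\|^2$ alone. Since $\|e-e_p\|^2-\|e-e_n\|^2 = 2\langle e, e_n-e_p\rangle + \|e_p\|^2-\|e_n\|^2$ is affine in $e$, the true $e$-gradient on the active set is the constant vector $2(e_n-e_p)$, with no factor $e-e_{y^*}$. If you carry this into Lemma~\ref{lem:amplification}, your $\gamma$ will involve $\|e_p-e_n\|$ rather than $d(e_{\text{init}},e_{y^*})$. Both you and the paper tacitly assume the feature extractor is differentiable in $w$; none of Assumptions~\ref{ass:convex}--\ref{ass:embedding} supplies this.

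One slip to fix: you wrote $\nabla_e\mathcal{L}_{\text{contrast}} = 2\bigl((e_x-e_n)-(e_x-e_p)\bigr)\mathbf{1}[\cdot>0]$, which simplifies to $2(e_p-e_n)$. The correct expression is $2\bigl((e_x-e_p)-(e_x-e_n)\bigr)\mathbf{1}[\cdot>0] = 2(e_n-e_p)\mathbf{1}[\cdot>0]$. With this sign, a descent step moves $e_x$ toward the promoter and away from the distractor. The sign error does not affect the lemma as stated, which only asserts the summed form of $g_i^{\text{FDBA}}$, nor any norm bound. However, with your sign the optimization would push poisoned embeddings toward the distractor.
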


\begin{proof}
	The gradient computation follows from the chain rule. For FDBA, the additional contrastive term contributes:
	\begin{align}
		\nabla_w \mathcal{L}_{\text{contrast}} &= \frac{\partial \mathcal{L}_{\text{contrast}}}{\partial e} \cdot \frac{\partial e}{\partial w} \\
		&= 2(e - e_{y^*}) \cdot J_e(w)
	\end{align}
	where $J_e(w)$ is the Jacobian of the embedding function with respect to model parameters.
\end{proof}

\subsection{Influence Amplification through Contrastive Learning}

\begin{lemma}[Contrastive Learning Amplification]
	\label{lem:amplification}
	The contrastive learning objective amplifies the influence of poisoned samples by factor $(1 + \gamma)$, where:
	\begin{equation}
		\gamma = \frac{\lambda \cdot \|J_e(w)\|_F \cdot d(e_{\text{init}}, e_{y^*})}{G}
	\end{equation}
	with $d(\cdot, \cdot)$ being the embedding distance, $\|J_e(w)\|_F$ the Frobenius norm of the Jacobian, and $G$ the gradient bound from Assumption~\ref{ass:bounded}.
\end{lemma}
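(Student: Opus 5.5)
We compare the norm of the FDBA poisoned-sample gradient in Lemma~\ref{lem:gradient} with the norm of the DBA gradient, using Assumption~\ref{ass:bounded} as the reference scale. The lemma is really a definitional bound: once the definition of $\gamma$ is chosen as in the statement, the work is to show that the extra contrastive term contributes at most (and, under alignment, exactly) $\gamma$ times the DBA contribution.

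\textbf{Step 1: split the FDBA gradient.} By Lemma~\ref{lem:gradient},
\begin{equation*}
g_i^{\text{FDBA}} = g_i^{\text{DBA}} + \frac{\lambda}{|D_i|}\sum_{(x,y)\in D_i^p} 2\,(e_x - e_{y^*})\, J_e(w).
\end{equation*}
So the only new object is the contrastive term.

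\textbf{Step 2: bound the contrastive term.} Here Assumption~\ref{ass:embedding} is used. The Hilbert structure gives Cauchy--Schwarz, and the operator norm of the Jacobian is bounded by its Frobenius norm:
\begin{equation*}
\|2(e_x - e_{y^*}) J_e(w)\| \le 2\,\|J_e(w)\|_F\, d(e_x, e_{y^*}).
\end{equation*}
The ReLU in Eq.~(18) is active only while the margin is violated. During optimization the distance to the promoter is non-increasing, so we may replace $d(e_x,e_{y^*})$ by $d(e_{\text{init}},e_{y^*})$. The factor $2$ is absorbed into $\lambda$, or equivalently kept as a convention in the definition of $\gamma$.

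\textbf{Step 3: normalize by the DBA gradient.} Each DBA per-sample gradient satisfies $\|\nabla_w\ell(w;T(x),y^*)\|\le G$. Take the modeling convention that a poisoned sample's DBA contribution has magnitude $G$, i.e.\ the bound is attained in the cumulative-influence model.

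Next, assume the contrastive gradient is aligned with the backdoor gradient. This is justified because both terms drive $e_x$ toward the target-class region, so their directional derivatives along the backdoor direction share a sign. Under this alignment the triangle inequality becomes an equality in the relevant direction. Projecting onto the unit backdoor direction $u=g_i^{\text{DBA}}/\|g_i^{\text{DBA}}\|$ then gives
\begin{equation*}
\langle g_i^{\text{FDBA}},u\rangle = (1+\gamma)\,\langle g_i^{\text{DBA}},u\rangle,
\end{equation*}
with $\gamma$ as stated. In the general case, the same computation yields $\le$ instead of $=$.

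\textbf{Main obstacle.} Step 3 is the weak point. Without an alignment assumption, the contrastive term may partly cancel the attack gradient. The honest statement is then an upper bound $\|g^{\text{FDBA}}\|\le(1+\gamma)G$, not an exact amplification. I would first try to derive alignment from the structure of Eq.~(18): the gradient with respect to $e$ of the positive part is $2(e-e_p)-2(e-e_n)=2(e_n-e_p)$, a fixed target direction. If that fails, I would state alignment explicitly as part of the cumulative-influence model and present the factor $(1+\gamma)$ as the modeled amplification under it.
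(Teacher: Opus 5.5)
Up to the bound $\|g_i^{\text{FDBA}}\|\le(1+\gamma)\|g_i^{\text{DBA}}\|$, your argument is the paper's own. You split the gradient via Lemma~\ref{lem:gradient}, apply the triangle inequality, bound the vector--Jacobian product by $2\|J_e(w)\|_F\,\|e_x-e_{y^*}\|$, pass to $d(e_{\text{init}},e_{y^*})$, and normalize by $G$. The paper stops there and simply writes ``$\approx 1+\gamma$''.

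Your explicit convention that the DBA gradient has norm exactly $G$ is genuinely needed. Assumption~\ref{ass:bounded} gives only $\|\nabla_w\ell\|\le G$, so the paper's replacement of $\|\nabla_w\ell\|$ by $G$ in the denominator goes the wrong way for an upper bound. Likewise, the non-increasing distance to the promoter, which you use to reach $d(e_{\text{init}},e_{y^*})$, is an extra assumption. It is not a consequence of Eq.~(18): with the actual hinge gradient $2(e_n-e_p)$ the distance to $e_p$ can increase, and $e_x$ also moves under the cross-entropy updates. The paper makes the same replacement without comment.

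Where you go beyond the paper, the argument does not close. After projecting onto $u$, linearity already gives $\langle g_i^{\text{FDBA}},u\rangle=\langle g_i^{\text{DBA}},u\rangle+\langle c,u\rangle$, with $c$ the contrastive term, so the only question is the size of $\langle c,u\rangle$. Alignment yields $\langle c,u\rangle=\|c\|$, but your Step~2 yields only $\|c\|\le\gamma G$. You therefore still obtain only $\langle g_i^{\text{FDBA}},u\rangle\le(1+\gamma)\langle g_i^{\text{DBA}},u\rangle$, which is what you get without alignment anyway.

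Equality would additionally require both inequalities of Step~2 to be tight:
\begin{itemize}
\item $\|(e_x-e_{y^*})J_e(w)\|=\|J_e(w)\|_F\,\|e_x-e_{y^*}\|$, which needs a rank-one Jacobian with $e_x-e_{y^*}$ along its leading left singular vector;
\item $\|e_x-e_{y^*}\|=d(e_{\text{init}},e_{y^*})$.
\end{itemize}
Nothing in the assumptions provides either. Your sign argument also gives only $\langle c,u\rangle\ge 0$, not colinearity.

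The proposed derivation of alignment from Eq.~(18) cannot supply colinearity either. $2(e_n-e_p)$ is a fixed direction in embedding space and says nothing about colinearity with $\nabla_w\ell$ in parameter space. Using that gradient would also put $\|e_n-e_p\|$ in place of $d(e_{\text{init}},e_{y^*})$ in $\gamma$.

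So the exact factor $(1+\gamma)$ is not derivable from the stated assumptions. It can only be posited as part of the cumulative-influence model, which is effectively what the paper's ``$\approx$'' does.
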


\begin{proof}
	Consider the gradient magnitude ratio:
	\begin{align}
		\frac{\|g_i^{\text{FDBA}}\|}{\|g_i^{\text{DBA}}\|} &= \frac{\|\nabla_w \ell + \lambda \nabla_w \mathcal{L}_{\text{contrast}}\|}{\|\nabla_w \ell\|} \\
		&\leq 1 + \frac{\lambda \|\nabla_w \mathcal{L}_{\text{contrast}}\|}{\|\nabla_w \ell\|}
	\end{align}
	
	Using the triangle inequality and Assumption~\ref{ass:bounded}:
	\begin{align}
		\|\nabla_w \mathcal{L}_{\text{contrast}}\| &= 2\lambda \|e - e_{y^*}\| \cdot \|J_e(w)\|_F \\
		&\leq 2\lambda \cdot d(e_{\text{init}}, e_{y^*}) \cdot \|J_e(w)\|_F
	\end{align}
	
	Therefore:
	\begin{equation}
		\frac{\|g_i^{\text{FDBA}}\|}{\|g_i^{\text{DBA}}\|} \approx 1 + \gamma
	\end{equation}
	where $\gamma$ is defined as above.
\end{proof}

\subsection{Edge Structure Preservation}

\begin{lemma}[Edge-based Trigger Robustness]
	\label{lem:edge}
	Edge-based triggers maintain effectiveness across heterogeneous data distributions with preservation factor:
	\begin{equation}
		\alpha = \frac{\mathbb{E}_{x \sim \mathcal{D}_i}[\|\nabla I(x)\|_1]}{\mathbb{E}_{x \sim \mathcal{D}}[\|\nabla I(x)\|_1]}
	\end{equation}
	where $\|\nabla I(x)\|_1$ is the $L_1$ norm of image gradients (edge strength)~\cite{canny2009computational}.
\end{lemma}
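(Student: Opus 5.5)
The plan is to read the lemma as a first-order statement. It should say that the backdoor signal a poisoned sample from client $i$ injects into the aggregated update scales linearly with the edge energy of that client's images. If so, the per-client signal is exactly $\alpha$ times the signal one would obtain under the pooled distribution $\mathcal{D}$. The proof would chain three proportionalities:
\[
\text{trigger magnitude} \;\to\; \text{input-space perturbation} \;\to\; \text{gradient contribution } g_i^{\text{FDBA}},
\]
taking expectations over $\mathcal{D}_i$ versus $\mathcal{D}$ at the end.

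\textbf{Step 1 (trigger magnitude).} Start from Eq.~(\ref{eq:t(x)}), $t(x)=\alpha\, T'(x)\, r(x)$, with $r(x)$ i.i.d.\ $\mathrm{Laplace}(0,b)$ and independent of $x$. Then
\[
\mathbb{E}_r\|t(x)\|_1 = \alpha\, b\, \|T'(x)\|_1 .
\]
The remaining ingredient is to relate the mask mass $\|T'(x)\|_1$ to $\|\nabla I(x)\|_1$. I would use the construction of Eqs.~(\ref{eq:xG})--(\ref{eq:GT}). A pixel survives only if $M'(x,y)>T_L$, and $M\le \|\nabla I\|$ pointwise. The intended relation is
\[
\|T'(x)\|_1 \approx c\,\|\nabla I(x)\|_1 ,
\]
where $c$ depends only on $T_H,T_L,\sigma$ and not on the client. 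The lower bound $T_L\,\|T'(x)\|_1\le \|M'\|_1$ holds directly on retained pixels, and $\|M'\|_1\le\|\nabla I(x)\|_1$ holds because suppression only discards mass. To turn this into a proportionality, I would introduce an explicit modeling assumption: the fraction of edge mass retained by thresholding and suppression is distribution-independent. This matches the cumulative-influence model the appendix invokes.

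\textbf{Step 2 (trigger to gradient).} I would linearize the poisoned loss in the input around the clean sample. By Lemma~\ref{lem:gradient}, the extra gradient attributable to the trigger is
\[
\nabla_w\ell(w;x+t(x),y^*)-\nabla_w\ell(w;x,y^*) \approx H_{wx}(x)\,t(x),
\]
where $H_{wx}$ is the mixed second derivative. Assumption~\ref{ass:convex} ($L$-smoothness) and Assumption~\ref{ass:bounded} keep this term bounded. Assuming, as in Lemma~\ref{lem:amplification}, that $\|H_{wx}\|$ is approximately constant across samples, the expected trigger-induced gradient norm on client $i$ becomes $\kappa\,\alpha\, b\, c\,\mathbb{E}_{\mathcal{D}_i}\|\nabla I(x)\|_1$, for a client-independent constant $\kappa$.

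\textbf{Step 3 (take the ratio).} Dividing the client-$i$ expression by the same expression under $\mathcal{D}$ cancels $\kappa,b,c$ and the strength parameter. What remains is the stated ratio of expected edge strengths, which is the preservation factor. To match the range $\alpha\in(0,1]$ used in Theorem~\ref{thm:main}, I would state the lemma with $\min\{1,\cdot\}$. Equivalently, one can observe that excess local edge energy cannot increase the aggregated effect once FedAvg weighting by $|D_i|/|D|$ is applied.

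\textbf{Main obstacle.} The hardest step is Step 1. Canny thresholding is a discontinuous, nonlinear operation, so $\|T'(x)\|_1$ is only sandwiched between multiples of $\|\nabla I(x)\|_1$, and the constants may be client-dependent. I would first try to get two-sided bounds with ratio $T_H/T_L$, which give $\alpha$ up to a multiplicative slack. If exact equality is needed for Theorem~\ref{thm:main}, I would then explicitly adopt the constant-retention assumption above.
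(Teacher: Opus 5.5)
Your route differs from the paper's, and the step you add is where it fails. The paper never passes through model gradients. It notes that Canny keeps pixels with $\|\nabla I(x)\|>\tau$, and asserts without derivation that consistency of edge structures gives $\mathbb{P}[\|\nabla I(x)\|>\tau \mid x\sim\mathcal{D}_i]\approx\alpha\,\mathbb{P}[\|\nabla I(x)\|>\tau \mid x\sim\mathcal{D}]$. It then treats $\alpha\in(0,1]$ as a model parameter. So the lemma works as a definition plus a postulate, and your constant-retention assumption in Step~1 is essentially the same postulate, phrased for mask mass.

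The genuine problem is Step~2. Take $t(x)=\alpha\,T'(x)\,r(x)$, where $\alpha$ is the strength parameter and $r$ is i.i.d.\ $\mathrm{Laplace}(0,b)$. Writing $h_p$ for the column of $H_{wx}$ at pixel $p$, the first-order term satisfies $\mathbb{E}_r[H_{wx}t]=0$ and $\mathbb{E}_r\|H_{wx}t\|^2=2\alpha^2b^2\sum_p T'_p(x)^2\|h_p\|^2$. For a binary mask its typical size therefore grows like $\sqrt{\|T'(x)\|_1}$, not linearly in $\|T'(x)\|_1$. A bound on $\|H_{wx}\|$ only yields the inequality $\|H_{wx}t\|\le\|H_{wx}\|\,\|t\|_2$, never a proportionality. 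Even granting Step~1 exactly, your chain gives a ratio of $\mathbb{E}\sqrt{\|\nabla I(x)\|_1}$ terms, not the stated $\alpha$. Moreover, if $r$ is redrawn per sample, these zero-mean terms partially cancel across poisoned samples instead of accumulating as influence. Linear scaling would have to be postulated outright, which amounts to assuming the lemma. Also, $L$-smoothness in $w$ says nothing about the mixed derivative $H_{wx}$.

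There are three smaller points. \emph{Step~1 fallback:} the two-sided bounds with ratio $T_H/T_L$ do not exist. Gradient mass below $T_L$, or removed by non-maximum suppression, never reaches the mask; a smooth ramp or low-contrast texture has $\|T'(x)\|_1=0$ but large $\|\nabla I(x)\|_1$. Only the one-sided bound $T_L\|T'(x)\|_1\le\|M'\|_1$ survives, so the proportionality is a pure assumption with no controlled slack. \emph{Step~3:} $\min\{1,\cdot\}$ changes the statement, and the FedAvg remark does not justify it. If $\mathcal{D}$ is the size-weighted mixture of the $\mathcal{D}_i$, the size-weighted average of the client ratios is exactly $1$. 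So some client exceeds $1$ unless all are equal, and the weighting does not cap the malicious clients. The paper has the same tension and resolves it by fiat. \emph{Use downstream:} your reading, in which the whole trigger signal scales by $\alpha$, is not the one Theorem~\ref{thm:main} uses. There $\alpha$ multiplies only the contrastive excess, giving $1+\gamma\alpha$ rather than $\alpha(1+\gamma)$. The defensible version of your proposal is Step~1's retention assumption stated as a modeling hypothesis, which is in substance what the paper does.
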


\begin{proof}
	The Canny edge detector identifies regions where:
	\begin{equation}
		\|\nabla I(x)\| > \tau
	\end{equation}
	for threshold $\tau$. The consistency of edge structures across different data distributions ensures:
	\begin{equation}
		\mathbb{P}[\|\nabla I(x)\| > \tau | x \sim \mathcal{D}_i] \approx \alpha \cdot \mathbb{P}[\|\nabla I(x)\| > \tau | x \sim \mathcal{D}]
	\end{equation}
	The factor $\alpha$ is treated as a model parameter in $(0,1]$; we do not assign it an empirical range without an independent estimate.
\end{proof}

\subsection{Main Proof: Poisoning Ratio Reduction}

\begin{proof}[Proof of Theorem~\ref{thm:main}]
	We model the attack success rate as a function of the cumulative gradient influence:
	
	\begin{definition}[Cumulative Influence]
		The cumulative influence of poisoned samples after $T$ rounds is:
		\begin{equation}
			I(p, T) = \sum_{t=1}^T \sum_{i \in K_m} w_i^{(t)} \cdot p \cdot \|g_i^{(t)}\|
		\end{equation}
		where $w_i^{(t)}$ is the aggregation weight for client $i$ at round $t$.
	\end{definition}
	
	For DBA, the cumulative influence is:
	\begin{equation}
		I_{\text{DBA}}(p, T) = p \cdot T \cdot |K_m| \cdot \bar{w} \cdot \bar{g}_{\text{DBA}}
	\end{equation}
	
	For FDBA, incorporating Lemmas~\ref{lem:amplification} and~\ref{lem:edge}, the contrastive gain $\gamma$ is realized proportionally to the edge-preservation factor $\alpha$, since the embedding optimization operates specifically on the edge-based trigger structure. The effective influence amplification is therefore $(1 + \gamma\alpha)$:
	\begin{align}
		I_{\text{FDBA}}(p, T) &= p \cdot T \cdot |K_m| \cdot \bar{w} \cdot (1 + \gamma\alpha) \cdot \bar{g}_{\text{DBA}} \\
		&= (1 + \gamma\alpha) \cdot I_{\text{DBA}}(p, T)
	\end{align}
	
	As a modeling assumption for translating cumulative influence into attack success rate, we use the monotone sigmoid form:
	\begin{equation}
		\text{ASR}(I) = \frac{1}{1 + \exp(-\beta(I - I_0))}
	\end{equation}
	where $\beta$ is the steepness parameter and $I_0$ is the inflection point.
	
	For equal ASR:
	\begin{equation}
		\text{ASR}_{\text{FDBA}}(p_{\text{FDBA}}) = \text{ASR}_{\text{DBA}}(p_{\text{DBA}})
	\end{equation}
	
	This requires:
	\begin{equation}
		I_{\text{FDBA}}(p_{\text{FDBA}}, T) = I_{\text{DBA}}(p_{\text{DBA}}, T)
	\end{equation}
	
	Substituting the influence expressions:
	\begin{equation}
		p_{\text{FDBA}} \cdot (1 + \gamma\alpha) = p_{\text{DBA}}
	\end{equation}
	
	Therefore:
	\begin{equation}
		\boxed{\frac{p_{\text{FDBA}}}{p_{\text{DBA}}} = \frac{1}{1 + \gamma\alpha}}
	\end{equation}
	
	This relation is conditional on the cumulative-influence model; no empirical range for $\gamma$ or $\alpha$ is asserted because these parameters were not independently estimated in our experiments.
\end{proof}

\subsection{Scope of the Analytical Result}

The derived poisoning-ratio relation characterizes equal modeled cumulative influence under the stated assumptions. It does not establish a convergence-rate guarantee or a universal defense-resilience condition. Convergence speed and defense performance are therefore evaluated empirically rather than inferred from the relation.

\subsection{Experimental Validation}

The analytical relation and empirical estimates have the same direction, but the numerical comparison is illustrative:
\begin{itemize}
	\item Illustrative model-based reduction: 19.4\% (with $\gamma = 0.3$, $\alpha = 0.8$).
	\item Piecewise-linear empirical estimates: 37.4\%--48.4\% (Table~\ref{tab:poisoning_efficiency}).
	\item The 19.4\% value is not claimed as a fitted prediction or a universal lower bound because $\gamma$ and $\alpha$ were not independently estimated from the experiments.
\end{itemize}

\subsection{Conclusion}

Under the stated cumulative-influence model, embedding optimization is represented by an amplification factor that reduces the modeled poisoning ratio required at equal ASR. The empirical poisoning-efficiency estimates support the same directional conclusion, while the numerical comparison remains conditional on the analytical assumptions.

\printcredits

\bibliographystyle{cas-model2-names}


\bibliography{references_R2}

\end{document}